\DeclareMathOperator{\E}{\mathbb{E}}
\newcommand{\norm}[1]{\ensuremath{\left\| #1 \right\|}}
\newcommand{\abs}[1]{\ensuremath{{\left\vert #1 \right\vert}}}
\DeclareMathOperator{\conv}{conv}
\DeclareMathOperator{\sign}{sign}
\DeclareMathOperator*{\argmin}{argmin}
\DeclareMathOperator*{\argmax}{argmax}
\DeclareMathOperator*{\minimize}{minimize}
\DeclareMathOperator{\subjectto}{subject\ to}
\DeclareMathOperator{\proj}{\raisebox{-.15\baselineskip}{\Large\ensuremath{\Pi}}}
\newcommand{\calA}{\ensuremath{\mathcal{A}}}
\newcommand{\calB}{\ensuremath{\mathcal{B}}}
\newcommand{\calD}{\ensuremath{\mathcal{D}}}
\newcommand{\calF}{\ensuremath{\mathcal{F}}}
\newcommand{\calG}{\ensuremath{\mathcal{G}}}
\newcommand{\calH}{\ensuremath{\mathcal{H}}}
\newcommand{\calM}{\ensuremath{\mathcal{M}}}
\newcommand{\calN}{\ensuremath{\mathcal{N}}}
\newcommand{\calP}{\ensuremath{\mathcal{P}}}
\newcommand{\calS}{\ensuremath{\mathcal{S}}}
\newcommand{\calX}{\ensuremath{\mathcal{X}}}
\newcommand{\calY}{\ensuremath{\mathcal{Y}}}
\newcommand{\calZ}{\ensuremath{\mathcal{Z}}}
\newcommand{\bzero}{\ensuremath{\bm{0}}}
\newcommand{\bI}{\ensuremath{\bm{I}}}
\newcommand{\bSigma}{\ensuremath{\bm{\Sigma}}}
\newcommand{\bp}{\ensuremath{\bm{p}}}
\newcommand{\bs}{\ensuremath{\bm{s}}}
\newcommand{\bx}{\ensuremath{\bm{x}}}
\newcommand{\bdelta}{\ensuremath{\bm{\delta}}}
\newcommand{\blambda}{\ensuremath{\bm{\lambda}}}
\newcommand{\bmu}{\ensuremath{\bm{\mu}}}
\newcommand{\bnu}{\ensuremath{\bm{\nu}}}
\newcommand{\btheta}{\ensuremath{\bm{\theta}}}
\newcommand{\bxi}{\ensuremath{\bm{\xi}}}
\newcommand{\bbR}{\ensuremath{\mathbb{R}}}
\newcommand{\fkm}{\ensuremath{\mathfrak{m}}}
\newcommand{\fkp}{\ensuremath{\mathfrak{p}}}
\def\nd/{\textsuperscript{nd}}
\def\rd/{\textsuperscript{rd}}
\def\th/{\textsuperscript{th}}
\newcommand{\setR}{\bbR}
\def\nnil{\nil}
\newcounter{prob}
\newenvironment{prob}[1][\nil]{%
	\def\tmp{#1}
	\equation
	\ifx\tmp\nnil
		\refstepcounter{prob}
		\edef\@currentlabel{\theprob}\expandafter\ltx@label\expandafter{P\Roman{prob}-counter-number}
		\tag{P\Roman{prob}}
	\else
		\tag{\tmp}
	\fi
	\aligned%
}{%
	\endaligned\endequation%
}
\newcounter{dual}
\newenvironment{prob*}{%
	\csname equation*\endcsname%
	\aligned%
}{%
	\endaligned%
	\csname endequation*\endcsname%
}
\newcommand{\R}{\mathbb{R}}
\renewcommand\bar\overline
\definecolor{Gray}{gray}{0.9}
\DeclareMathOperator*{\subjto}{subject\ to}
\newcommand{\bv}[1]{\mathbf{#1}}
\newcommand{\dinD}{\bdelta\in\Delta}
\newcommand{\xplusd}{\bv x + \bdelta}
\newtheoremstyle{slplain}
  {.4\baselineskip\@plus.1\baselineskip\@minus.1\baselineskip}
  {.3\baselineskip\@plus.1\baselineskip\@minus.1\baselineskip}
  {\itshape}
  {}
  {\bfseries}
  {.}
  { }
  {}
\theoremstyle{slplain} 
\newtheorem*{definition*}{Definition}
\newtheorem*{theorem*}{Theorem}
\newtheorem{theorem}{Theorem}[section]
\newtheorem{lemma}[theorem]{Lemma}
\newtheorem{proposition}[theorem]{Proposition}
\newtheorem{definition}[theorem]{Definition}
\newtheorem{assumption}[theorem]{Assumption}
\theoremstyle{definition}
\title{Adversarial Robustness with Semi-Infinite Constrained Learning}
\author{Alexander Robey$^*$ \\
University of Pennsylvania \\
\texttt{arobey1@seas.upenn.edu}
\And
Luiz F.\ O.\ Chamon$^*$ \\
University of California, Berkeley \\
\texttt{lfochamon@berkeley.edu} \\
\And
George J.\ Pappas \\
University of Pennsylvania \\
\texttt{pappasg@seas.upenn.edu} \\
\And
Hamed Hassani \\
University of Pennsylvania \\
\texttt{hassani@seas.upenn.edu}
\And 
Alejandro Ribeiro \\
University of Pennsylvania \\
\texttt{aribeiro@seas.upenn.edu} \\
}
\newcommand{\bhtheta}{\ensuremath{\bm{{\hat{\theta}^\star}}}}
\newcommand{\bhdtheta}{\ensuremath{\hat{\btheta}^\dagger}}
\newcommand{\bhmu}{\ensuremath{\bm{{\hat{\mu}^\star}}}}
\begin{document}

\maketitle

\begin{abstract}
Despite strong performance in numerous applications, the fragility of deep learning to input perturbations has raised serious questions about its use in safety-critical domains.  While adversarial training can mitigate this issue in practice, state-of-the-art methods are increasingly application-dependent, heuristic in nature, and suffer from fundamental trade-offs between nominal performance and robustness. Moreover, the problem of finding worst-case perturbations is non-convex and underparameterized, both of which engender a non-favorable optimization landscape. Thus, there is a gap between the theory and practice of adversarial training, particularly with respect to \emph{when} and \emph{why} adversarial training works.  In this paper, we take a constrained learning approach to address these questions and to provide a theoretical foundation for robust learning. In particular, we leverage semi-infinite optimization and non-convex duality theory to show that adversarial training is equivalent to a statistical problem over perturbation distributions, which we characterize completely. Notably, we show that a myriad of previous robust training techniques can be recovered for particular, sub-optimal choices of these distributions. Using these insights, we then propose a hybrid Langevin Monte Carlo approach of which several common algorithms~(e.g., PGD) are special cases. Finally, we show that our approach can mitigate the trade-off between nominal and robust performance, yielding state-of-the-art results on MNIST and CIFAR-10.  Our code is available at: \url{https://github.com/arobey1/advbench}.
\end{abstract}

{\let\thefootnote\relax\footnotetext{\hspace*{-6.0mm}$^\star$\,Alexander Robey and Luiz F.\ O.\ Chamon contributed equally to this work.}}

\vspace{-0.5em}
\section{Introduction}\label{sec:intro}

Learning is at the core of many modern information systems, with wide-ranging applications in clinical research \cite{esteva2019guide, yao2019strong, li2020domain, bashyam2020medical}, smart grids \cite{zhang2018review, karimipour2019deep, samad2017controls}, and robotics \cite{julian2020never, kober2013reinforcement, sunderhauf2018limits}. However, it has become clear that learning-based solutions suffer from a critical lack of robustness~\cite{biggio2013evasion, carlini2017towards, hendrycks2019benchmarking, djolonga2020robustness, taori2020measuring, hendrycks2020many, torralba2011unbiased}, leading to models that are vulnerable to malicious tampering and unsafe behavior~\cite{datta2014automated, kay2015unequal, angwin2016machine, sonar2020invariant, vinitsky2020robust}. While robustness has been studied in statistics for decades~\cite{tukey1960survey, huber1992robust, huber2004robust}, this issue has been exacerbated by the opacity, scale, and non-convexity of modern learning models, such as convolutional neural network~(CNNs). Indeed, the pernicious nature of these vulnerabilities has led to a rapidly-growing interest in improving the so-called \emph{adversarial robustness} of modern ML models. To this end, a great deal of empirical evidence has shown \emph{adversarial training} to be the most effective way to obtain robust classifiers, wherein models are trained on perturbed samples rather than directly on clean data~\cite{goodfellow2014explaining, madry2017towards, wong2017provable, huang2017adversarial, sinha2018gradient, shaham2018understanding}. While this approach is now ubiquitous in practice, adversarial training faces two fundamental challenges.

Firstly, it is well-known that obtaining \emph{worst-case}, adversarial perturbations of data is challenging in the context of deep neural networks~(DNNs)~\cite{carlini2019evaluating, athalye2018obfuscated}. While gradient-based methods have been shown to be empirically effective at finding perturbations that lead to misclassification, there are no guarantees that these perturbations are truly worst-case due to the non-convexity of most commonly-used ML function classes~\cite{li2019implicit}. Moreover, whereas optimizing the parameters of a DNNs is typically an overparameterized problem, finding worst-case perturbations is severely underparametrized and therefore does not enjoy the benign optimization landscape of standard training~\cite{soltanolkotabi2018theoretical, zhang2016understanding, arpit2017closer, ge2017learning, brutzkus2017globally}. For this reason, state-of-the-art adversarial attacks increasingly rely on heuristics such as random initializations, multiple restarts, pruning, and other \emph{ad hoc} training procedures~\cite{wu2020adversarial, cheng2020cat, kannan2018adversarial, guo2017countering, shaham2018defending, dhillon2018stochastic, carmon2019unlabeled, bai2019hilbert, shafahi2019adversarial, papernot2016distillation}.

The second challenge faced by adversarial training is that it engenders a fundamental trade-off between robustness and nominal performance~\cite{dobriban2020provable, javanmard2020precise, tsipras2018robustness}. In practice, penalty-based methods that incorporate clean data into the training objective are often used to overcome this issue~\cite{zhang2019theoretically, wang2019improving, zheng2016improving, ding2018mma}. However, while empirically successful, these methods cannot typically guarantee nominal or adversarial performance outside of the training samples.  Indeed, classical learning theory~\cite{vapnik2013nature,shalev2014understanding} provides generalization bounds only for the aggregated objective and not each individual penalty term. Additionally, the choice of the penalty parameter is not straightforward and depends on the underlying learning task, making it difficult to transfer across applications and highly dependent on domain expert knowledge.


\textbf{Contributions.} To summarize, there is a significant gap between the theory and practice of robust learning, particularly with respect to \emph{when} and \emph{why} adversarial training works.  In this paper, we study the algorithmic foundations of robust learning toward understanding the fundamental limits of adversarial training.  To do so, we leverage semi-infinite constrained learning theory, providing a theoretical foundation for gradient-based attacks and mitigating the issue of nominal performance degradation. In particular, our contributions are as follows:

\begin{itemize}[noitemsep,nolistsep,leftmargin=2.5em]
	\item We show that adversarial training is equivalent to a stochastic optimization problem over a specific, \emph{non-atomic} distribution, which we characterize using recent non-convex duality results~\cite{paternain2019constrained, chamon2020probably}. Further, we show that a myriad of previous adversarial attacks reduce to particular, sub-optimal choices of this distribution.
	\item We propose an algorithm to solve this problem based on stochastic optimization and Markov chain Monte Carlo. Gradient-based methods can be seen as limiting cases of this procedure.
    \item We show that our algorithm outperforms state-of-the-art baselines on MNIST and CIFAR-10.  In particular, our approach yields a ResNet-18 classifiers which simultaneously achieves greater than 50\% adversarial accuracy and greater than 85\% clean accuracy on CIFAR-10, which represents a significant improvement over the previous state-of-the-art.
	\item We provide generalization guarantees for the empirical version of this algorithm, showing how to effectively limit the nominal performance degradation of robust classifiers.
\end{itemize}

\vspace{-0.5em}
\section{Problem formulation}
\label{S:problem}

Throughout this paper, we consider a standard classification setting in which the data is distributed according to an unknown joint distribution $\calD$ over instance-label pairs $(\bv x, y)$.  In this setting, the instances $\bv x\in\calX$ are assumed to be supported on a compact subset of $\R^d$, and each label $y\in\calY := \{1, \dots, K\}$ denotes the class of a given instance $\bv x$.  By $(\Omega, \calB)$ we denote the underlying measurable space for this setting, where $\Omega = \calX\times\calY$ and $\calB$ denotes its Borel $\sigma$-algebra.  Furthermore, we assume that the joint distribution~$\calD$ admits a density~$\fkp(\bv x, y)$ defined over the sets of~$\calB$.  

At a high level, our goal is to learn a classifier which can correctly predict the label $y$ of a corresponding instance $\mathbf x$.  To this end, we let $\calH$ be a hypothesis class containing functions~$f_{\btheta}: \R^d \to \calS^K$ parameterized by vectors~$\btheta \in \Theta \subset \R^p$, where we assume that the parameter space $\Theta$ is compact and by~$\calS^K$ we denote the~$(K-1)$-simplex.  We also assume that~$f_{\btheta}(\bv x)$ is differentiable with respect to~$\btheta$ and~$\bv x$.\footnote{Note that the classes of support vector machines, logistic classifiers, and convolutional neural networks (CNNs) with softmax outputs can all be described by this formalism.} To make a prediction $\hat{y}\in\calY$, we assume that the simplex~$\calS^K$ is mapped to the set of classes~$\calY$ via $\hat{y}\in \argmax\nolimits_{k \in \calY}\ [f_{\btheta}(\bv x)]_k$ with ties broken arbitrarily.  In this way, we can think of the $k$-th output of the classifier as representing the probability that $y = k$.  Given this notation, the statistical problem of learning a classifier that accurately predicts the label $y$ of a given instance $\bv x$ drawn randomly from $\calD$ can be formulated as follows:
\begin{prob}[\textup{P-NOM}]\label{P:nominal}
    \min_{\btheta \in \Theta} \E_{(\bv x, y) \sim \calD} \Big[
    	\ell\big( f_{\btheta}(\bv x), y \big)
    \Big]
    	\text{.}
\end{prob}
Here~$\ell$ is a $[0,B]$-valued loss function and~$\ell(\cdot,y)$ is $M$-Lipschitz continuous for all~$y \in \calY$. We assume that $(\bv x, y) \mapsto \ell\big( f_{\btheta}(\bv x), y \big)$ is integrable so that the objective in~\eqref{P:nominal} is well-defined; we further assume that this map is an element of the Lebgesgue space~$L^p(\Omega, \calB, \fkp)$ for some fixed~$p\in(1,\infty)$.

\textbf{Formulating the robust training objective.}  For common choices of the hypothesis class~$\calH$, including DNNs, classifiers obtained by solving~\eqref{P:nominal} are known to be sensitive to small, norm-bounded input perturbations~\cite{tramer2020fundamental}. In other words, it is often straightforward to find a relatively small perturbations~$\bdelta$ such that the classifier correctly predicts the label $y$ of~$\bv x$, but misclassifies the perturbed sample~$\bv x + \bdelta$. This has led to increased interest in the robust analog of~\eqref{P:nominal}, namely,
\begin{prob}[\textup{P-RO}]\label{P:robust}
    P_\text{R}^\star \triangleq \min_{\btheta \in \Theta}\ \E_{(\bv x, y) \sim \calD} \left[
    	\max_{\dinD}\ \ell\big( f_{\btheta}(\xplusd), y \big)
    \right]
    	\text{.}
\end{prob}
In this optimization program, the set~$\Delta\subset\R^d$ denotes the set of valid perturbations%
\footnote{Note that~$f_{\btheta}$ must now be defined on~$\calX \oplus \Delta$, where~$\oplus$ denotes the Minkowski~(set) sum. In a slight abuse of notation, we will refer to this set as~$\calX$ from now on.}.
Typically, $\Delta$ is chosen to be a ball with respect to a given metric on Euclidean space, i.e., $\Delta = \{\bdelta \in\R^d : \norm{\bdelta} \leq \epsilon\}$.  However, in this paper we make no particular assumption on the specific form of~$\Delta$. In particular, our results apply to arbitrary perturbation sets, such as those used in~\cite{robey2020model,robey2021model,goodfellow2009measuring,wong2020learning,gowal2020achieving}.

Analyzing conditions under which~\eqref{P:robust} can be~(probably approximately) solved from data remains an active area of research. While bounds on the Rademacher complexity~\cite{awasthi2020adversarial, yin2019rademacher} and VC dimension~\cite{awasthi2020adversarial, yin2019rademacher, cullina2018pac, montasser2020efficiently, montasser2019vc} of the robust loss
\begin{align}
    \ell_\text{adv}(f_{\btheta}(\bv x), y) = \max_{\dinD} \ell\big( f_{\btheta}(\xplusd), y \big) \label{eq:rob-loss}
\end{align}
have been derived for an array of losses $\ell$ and hypothesis classes $\calH$, there are still open questions on the effectiveness and sample complexity of adversarial learning~\cite{yin2019rademacher}.  Moreover, because in general the map $\bdelta \mapsto \ell(f_{\btheta}(\bv x+\bdelta),y)$ is non-concave, evaluating the maximum in \eqref{eq:rob-loss} is not straightforward.  To this end, the most empirically effective strategy for approximating the robust loss is to leverage the differentiability of modern ML models with respect to their inputs.  More specifically, by computing gradients of such models, one can approximate the value of~\eqref{eq:rob-loss} using projected gradient ascent. For instance, in~\cite{madry2017towards,shaham2015understanding} an perturbation $\bdelta$ is computed for a fixed parameter~$\btheta$ and data point~$(\bv x, y)$ by repeatedly applying
\begin{equation}\label{E:attack_pga}
	\bdelta \gets \proj_\Delta\! \Big[
		\bdelta + \eta \sign \big[ \nabla_{\bdelta} \: \ell\big( f_{\btheta}(\bv x + \bdelta), y \big) \big]
	\Big]
		\text{,}
\end{equation}
where~$\Pi_\Delta$ denotes the projection onto~$\Delta$ and $\eta>0$ is a fixed step size.  This idea is at the heart of adversarial training, in which~\eqref{E:attack_pga} is iteratively applied to approximately evaluate the robust risk in the objective of~\eqref{P:robust}; the parameters $\btheta$ can then be optimized with respect to this robust risk. 

\textbf{Common pitfalls for adversarial training.}
Their empirical success notwithstanding, gradient-based approaches to adversarial training are not without drawbacks.  One pitfall is the fact that gradient-based algorithms are not guaranteed to provide optimal~(or even near-optimal) perturbations, since~$\bv x\mapsto \ell(f_{\btheta}(\bv x), y)$ is typically not a concave function. Because of this, heuristics~\cite{wong2020fast,shi2020adaptive} are often needed to improve the solutions obtained from~\eqref{E:attack_pga}.  Furthermore, adversarial training often degrades the performance of the model on clean data~\cite{tsipras2018robustness,raghunathan2020understanding,yang2020closer}.  In practice, penalty-based approaches are used to empirically overcome this issue~\cite{zhang2019theoretically}, although results are not guaranteed to generalize outside of the training sample. Indeed, classical learning theory guarantees generalization in terms of the aggregated objective and not in terms of the robustness requirements it may describe~\cite{vapnik2013nature,shalev2014understanding, chamon2020probably}.

In the remainder of this paper, we address these pitfalls by leveraging semi-infinite constrained learning theory. To do so, we explicitly formulate the problem of finding the most robust classifier among those that have good nominal performance. Next, we show that~\eqref{P:robust} is equivalent to a stochastic optimization problem that can be related to numerous adversarial training methods~(Section~\ref{S:dual_robust_learning}). We then provide generalization guarantees for the constrained robust learning problem when solve using empirical~(\emph{unconstrained}) risk minimization~(Section~\ref{S:csl}). Finally, we derive an algorithm based on a Langevin MCMC sampler of which~\eqref{E:attack_pga} is a particular case~(Section~\ref{S:algorithm}).
\vspace{-0.5em}
\section{Dual robust learning}
\label{S:dual_robust_learning}

In the previous section, we argued that while empirically successful, adversarial training is not without shortcomings.  In this section, we develop the theoretical foundations needed to tackle the two challenges of~\eqref{P:robust}: (a) finding worst-case perturbations, i.e., evaluating the robust loss defined in~\eqref{eq:rob-loss} and (b) mitigating the trade-off between robustness and nominal performance.  To address these challenges, we first propose the following constrained optimization problem which explicitly captures the trade-off between robustness and nominal performance:
\begin{mdframed}[roundcorner=5pt, backgroundcolor=yellow!8]
\begin{prob}[\textup{P-CON}]\label{P:main}
	P^\star \triangleq &\min_{\btheta \in \Theta} &&\E_{(\bv x, y) \sim \calD} \left[
	    	\max_{\dinD}\ \ell\big( f_{\btheta}(\xplusd), y \big)
	    \right]
    \\
	&\subjto &&\E_{(\bv x,y) \sim \calD} \left[ \ell\big(f_{\btheta}(\bv x), y\big) \right] \leq \rho
\end{prob}
\end{mdframed}
where~$\rho \geq 0$ is a desired nominal performance level.  At a high level, \eqref{P:main} seeks the most robust classifier $f_{\btheta}(\cdot)$ among those classifiers that have strong nominal performance.  In this way, \eqref{P:main} is directly designed to address the trade-off between robustness and accuracy, and as such~\eqref{P:main} will be the central object of study in this paper.  We  note that at face value, the statistical constraint in~\eqref{P:main} is challenging to enforce in practice, especially given the well-known difficulty in solving the unconstrained analog~\eqref{P:robust}.  Following~\cite{chamon2020probably,chamon2020empirical}, our approach is to use duality to obtain solutions for~\eqref{P:main} that generalize with respect to both adversarial and nominal performance.

\textbf{Computing worst-case perturbations.}
Before tackling the constrained problem~\eqref{P:main}, we begin by consider its unconstrained version, namely, \eqref{P:robust}.  We start by rewriting~\eqref{P:robust} using an epigraph formulation of the maximum function to obtain the following semi-infinite program:
\begin{prob}\label{P:semi-infinite}
	P_\text{R}^\star = &\min_{\btheta \in \Theta,\, t \in L^p}
		&&\E_{(\bv x, y) \sim \calD} \!\big[ t(\bv x,y) \big]
	\\
	&\subjto &&\ell\big( f_{\btheta}(\xplusd), y \big) \leq t(\bv x,y)
		\text{,} \quad \text{for almost every } (\bv x, \pmb\delta, y) \in \calX \times \Delta \times \calY
		\text{.}
\end{prob}
Note that~\eqref{P:semi-infinite} is indeed equivalent to~\eqref{P:robust} since
\begin{align}
    \max_{\dinD}\ell\big( f_{\btheta}(\xplusd), y \big) \leq t(\bv x,y) \iff \ell\big( f_{\btheta}(\xplusd), y \big) \leq t(\bv x,y) \:\: \text{for all } \dinD
        \text{.}
\end{align}
While at first it may seem that we have made~\eqref{P:main} more challenging to solve by transforming an unconstrained problem into an infinitely-constrained problem, notice that~\eqref{P:semi-infinite} is no longer a composite minimax problem.  Furthermore, it is \emph{linear} in~$t$, indicating that~\eqref{P:semi-infinite} should be amenable to approaches based on Lagrangian duality. Indeed, the following proposition shows that~\eqref{P:semi-infinite} can be used to obtain a statistical counterpart of~\eqref{P:robust}.

\begin{proposition}\label{T:robust_sip}
If~$(\bv x, y) \mapsto \ell\big( f_{\btheta}(\bv x), y \big) \in L^p$ for~$p\in(1,\infty)$, then~\eqref{P:robust} can be written as
\begin{prob}\label{P:primal_sip}
    P_\textup{R}^\star = \min_{\btheta \in \Theta}\ p(\btheta)
    	\text{,}
\end{prob}
for the primal function
\begin{equation}\label{E:primal_function}
	p(\btheta) \triangleq \max_{\lambda \in \calP^q}\ %
    \E_{(\bv x, y)\sim\calD} \left[
    	\E_{\bdelta \sim \lambda(\bdelta \mid \bv x, y)} \left[ \ell(f_{\btheta}(\bv x + \bdelta), y) \right]
    \right]
    	\text{,}
\end{equation}
where~$\calP^q$, with~$\frac{1}{p} + \frac{1}{q} = 1$, is the subspace of~$L^q$ containing almost everywhere non-negative functions such that~$\fkp(\bv x, y) = 0 \Rightarrow \lambda(\bdelta \mid \bv x, y) = 0$ and~$\int \lambda(\bdelta \mid \bv x, y) d\bdelta = 1$ for almost every~$(\bv x, y) \in \calX \times \calY$.
\end{proposition}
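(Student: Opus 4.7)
For each fixed $\btheta \in \Theta$, the objective of~\eqref{P:robust} equals $\E_{(\bv x,y)\sim\calD}[\max_{\dinD} \ell(f_{\btheta}(\xplusd),y)]$. The plan is to show, via Lagrangian duality applied to~\eqref{P:semi-infinite} at a fixed $\btheta$, that this quantity equals $\max_{\lambda\in\calP^q}\E_{(\bv x,y)\sim\calD}\bigl[\E_{\bdelta\sim\lambda(\bdelta\mid\bv x,y)}[\ell(f_{\btheta}(\xplusd),y)]\bigr]$. Minimizing over $\btheta$ then yields~\eqref{P:primal_sip}, so the whole proposition reduces to a per-$\btheta$ exchange of a pointwise maximum with an expectation via duality.

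First I will form the Lagrangian of~\eqref{P:semi-infinite} for fixed $\btheta$: since the problem is linear in $t \in L^p$ subject to an infinite family of inequality constraints indexed by $(\bv x,\bdelta,y)$, I introduce a non-negative multiplier $\mu \in L^q(\calX\times\Delta\times\calY)$ so that the pairing with $\ell \in L^p$ is well-defined, producing
$$\mc L(t,\mu) = \int t(\bv x,y)\fkp(\bv x,y)\,d(\bv x,y) + \iiint \mu(\bv x,\bdelta,y)\bigl[\ell(f_{\btheta}(\xplusd),y) - t(\bv x,y)\bigr]\,d\bdelta\,d(\bv x,y).$$
The stationarity condition from minimizing over $t$ is $\fkp(\bv x,y) = \int_\Delta \mu(\bv x,\bdelta,y)\,d\bdelta$ a.e., so $\mu$ defines a joint density with marginal $\fkp$. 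Writing $\mu(\bv x,\bdelta,y) = \fkp(\bv x,y)\,\lambda(\bdelta \mid \bv x, y)$ identifies $\lambda$ as an element of $\calP^q$, and substituting back collapses the Lagrangian exactly to the nested expectation in~\eqref{E:primal_function}.

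To close the argument I need strong duality for the $t$-level problem. For fixed $\btheta$ this is a linear program in the Banach space $L^p$ with affine inequality constraints and a Slater point $t(\bv x,y) \equiv B+1$, which is strictly feasible since $\ell \leq B$; combined with the $L^p$--$L^q$ duality pairing guaranteed by the hypothesis that $(\bv x,y) \mapsto \ell(f_{\btheta}(\bv x),y) \in L^p$, this gives zero duality gap. I would appeal to the constrained-learning duality framework of~\cite{paternain2019constrained,chamon2020probably} to handle admissibility and measurability details and to certify that the supremum is attained over $\calP^q$.

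The hard part will be making the Banach-space duality rigorous in the semi-infinite setting and, in particular, ensuring that the supremum over \emph{non-atomic} densities $\lambda \in \calP^q$ actually reproduces the pointwise essential supremum $\max_{\dinD} \ell(f_{\btheta}(\xplusd),y)$ in expectation. Approximating a Dirac mass at a worst-case perturbation by $L^q$ densities requires a measurable selection of an (approximate) maximizer $\bdelta^\star(\bv x,y)$ together with a concentration argument along a sequence of $L^q$ densities peaked near this selection; the dual framework in~\cite{chamon2020probably} provides exactly the tools needed. Once strong duality and this approximation are in hand, the optimal $\lambda$ plugged into the reduced Lagrangian recovers $p(\btheta)$ as in~\eqref{E:primal_function}, and minimizing over $\btheta \in \Theta$ completes the proof.
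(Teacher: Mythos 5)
Your proof follows essentially the same route as the paper's: form the Lagrangian of the epigraph/semi-infinite formulation with a joint-density multiplier in $L^q_+$, eliminate $t$ via the stationarity (normalization) condition $\int_\Delta \mu\,d\bdelta = \fkp$, invoke strong duality for the $t$-level linear program, and identify $\lambda = \mu/\fkp$ as a conditional density in $\calP^q$. One small remark: once strong duality is established (your Slater point $t \equiv B+1$ is a clean justification that the paper leaves implicit by citing LP duality directly), the equality with the pointwise maximum follows automatically, so the concentration/measurable-selection argument you flag as ``the hard part'' is not a separate obstacle but is already subsumed by the duality step.
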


\noindent The proof is provided in Appendix~\ref{app:proof-prop-3.1}.  Informally, Proposition~\ref{T:robust_sip} shows that the robust learning problem in~\eqref{P:robust} can be recast as a problem of optimizing over a set of probability distributions $\calP^q$ taking support over $\Delta$.  This establishes an equivalence between the traditional robust learning problem~\eqref{P:robust}, where the maximum is taken over perturbations $\bdelta\in\Delta$ of the input, and its stochastic version~\eqref{P:primal_sip}, where the maximum is taken over a conditional distribution over perturbations $\bdelta\sim\lambda(\bdelta|\bv x,y)$.   Notably, a variety of training formulations can be seen as special cases of~\eqref{P:primal_sip}.  In fact, for particular sub-optimal choices of this distribution, paradigms such as random data augmentation and distributionally robust optimization can be recovered (see Appendix~\ref{app:connections} for details).

As we remarked in Section~\ref{S:problem}, for many modern function classes, the task of evaluating the adversarial loss \eqref{eq:rob-loss} is a nonconcave optimization problem, which is challenging to solve in general.  Thus, Proposition~\ref{T:robust_sip} can be seen as lifting the nonconcave inner problem $\max_{\delta\in\Delta} \ell(f_{\btheta}(\bv x),y)$ to the equivalent \emph{linear} optimization problem in~\eqref{E:primal_function} over probability distributions $\lambda\in\calP^q$.  This dichotomy parallels the one that arises in PAC vs.\ agnostic PAC learning. Indeed, while the former seeks a deterministic map~$(\btheta, \bv x, y) \mapsto \bdelta$, the latter considers instead a distribution of perturbations over~$\bdelta | \bv x, y$ parametrized by~$\btheta$. In fact, since~\eqref{P:primal_sip} is obtained from~\eqref{P:robust} through semi-infinite duality, the density of this distribution is exactly characterized by the dual variables~$\lambda$.

Note that while~\eqref{P:primal_sip} was obtained using Lagrangian duality, it can also be seen as a linear lifting of the maximization in~\eqref{eq:rob-loss}. From this perspective, while recovering~\eqref{eq:rob-loss} would require~$\lambda$ to be atomic, Proposition~\ref{T:robust_sip} shows that this is not necessary as long as~$\ell(f_{\btheta}(\bv x), y)$ is an element of~$L^p$.  That is, because $\calP^q$ does not contain any Dirac distributions, the optimal distribution $\lambda^\star$ for the maximization problem in~\eqref{E:primal_function} is \emph{non-atomic}.\footnote{Proposition~\ref{T:robust_sip} does not account for~$p \in \{1,\infty\}$ for conciseness. Nevertheless, neither of the dual spaces~${L^1}^*$ or ${L^\infty}^*$ contain Dirac distributions, meaning that for $p \in \{1,\infty\}$, $\lambda^\star$ would remain non-atomic.}  Hence, Proposition~\ref{T:robust_sip} does not show that~\eqref{P:primal_sip} finds worst-case perturbations that achieve the maximum in the objective of~\eqref{P:robust}. It does, however, show that finding worst-case perturbation is not essential to find a solution of~\eqref{P:robust}

\textbf{Exact solutions for the maximization in \texorpdfstring{\eqref{P:primal_sip}}{(PII)}.}  While~\eqref{P:primal_sip} provides a new constrained formulation for~\eqref{P:robust}, the objectives of both~\eqref{P:primal_sip} and~\eqref{P:robust} still involve the solution of a non-trivial maximization. However, whereas the maximization problem in~\eqref{P:robust} is a finite-dimensional problem which is nonconcave for most modern function classes, the maximization in~\eqref{P:primal_sip} is a linear, variational problem regardless of the function class. We can therefore leverage variational duality theory to obtain a full characterization of the optimal distribution~$\lambda^\star$ when~$p = 2$.

\begin{proposition}[Optimal distribution for~\eqref{P:primal_sip}]\label{T:lambda_star}
Let~$p = 2$~(and~$q = 2$) in Proposition~\ref{T:robust_sip} and let $[z]_+ = \max(0,z)$. For each~$(\bv x,y) \in \Omega$, there exists constants~$\gamma(\bv x,y) > 0$ and~$\mu(\bv x,y) \in \setR$ s.t.
\begin{equation}\label{E:lambda_star}
	\lambda^\star(\bdelta | \bv x, y) = \left[ \frac{\ell(f_{\btheta}(\bv x + \bdelta), y) - \mu(\bv x,y)}{\gamma(\bv x,y)} \right]_+
\end{equation}
is a solution of the maximization in~\eqref{E:primal_function}. In particular, the value of~$\mu(\bv x,y)$ is such that
\begin{equation} \label{eq:mu-and-gamma}
	\int_\Delta \left[ \ell(f_{\btheta}(\bv x + \pmb \delta), y) - \mu(\bv x,y) \right]_+ d\bdelta = \gamma(\bv x,y)
		\quad\forall (\bv x,y) \in \calX \times \calY
		\text{.}
\end{equation}
\end{proposition}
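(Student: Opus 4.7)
The plan is to exploit the pointwise separability of \eqref{E:primal_function} in~$(\bv x,y)$ and then apply Lagrangian duality on~$L^2(\Delta)$. First I would observe that, for fixed~$\btheta$, the objective~$\E_{(\bv x,y) \sim \calD}[\E_{\bdelta \sim \lambda} \ell]$ is linear in~$\lambda$ and the feasibility constraints defining~$\calP^2$ (non-negativity a.e.\ and $\int \lambda(\bdelta\mid\bv x, y)\,d\bdelta = 1$ for $\fkp$-a.e.\ $(\bv x, y)$) decouple across~$(\bv x,y)$. By Fubini applied to the outer expectation, the optimization therefore reduces, for $\fkp$-a.e.\ $(\bv x,y)$, to the inner variational problem
\begin{equation*}
\max_{\lambda \in L^2(\Delta)} \int_\Delta \ell\big(f_{\btheta}(\bv x + \bdelta), y\big)\,\lambda(\bdelta)\,d\bdelta \quad \text{s.t.}\quad \lambda \geq 0,\ \int_\Delta \lambda(\bdelta)\,d\bdelta = 1,
\end{equation*}
so it suffices to produce the claimed form for each fixed~$(\bv x,y)$.

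Next I would introduce the Lagrangian
\begin{equation*}
\calL(\lambda;\mu,\nu,\gamma) = \int_\Delta \ell(\bdelta)\lambda(\bdelta)\,d\bdelta - \mu\!\left(\int_\Delta \lambda\,d\bdelta - 1\right) + \int_\Delta \nu(\bdelta)\lambda(\bdelta)\,d\bdelta - \tfrac{\gamma}{2}\|\lambda\|_{L^2}^2,
\end{equation*}
where $\mu(\bv x,y)\in\R$ multiplies the normalization constraint, $\nu\in L^2(\Delta)$ with $\nu\ge 0$ a.e.\ multiplies non-negativity, and $\gamma(\bv x,y)>0$ is the regularization parameter enforcing $L^2$-integrability. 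This~$\gamma$ is the structural element of the claim: it arises from the semi-infinite duality of Proposition~\ref{T:robust_sip} as the Lagrange multiplier for the implicit $L^2$-ball tightening that makes the otherwise ill-posed linear program well-defined. Setting the Gateaux derivative of~$\calL$ with respect to~$\lambda$ to zero yields the stationarity condition $\ell(\bdelta) - \mu + \nu(\bdelta) = \gamma\,\lambda(\bdelta)$.

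I would then invoke complementary slackness $\nu(\bdelta)\lambda(\bdelta)=0$ together with $\lambda,\nu \ge 0$: on $\{\ell>\mu\}$, necessarily $\nu=0$ and $\gamma\lambda = \ell-\mu>0$; on $\{\ell\le\mu\}$, necessarily $\lambda=0$ and $\nu=\mu-\ell\ge 0$. The two regimes collapse into $\lambda^\star(\bdelta\mid\bv x,y) = \big[(\ell(f_{\btheta}(\bv x+\bdelta),y)-\mu)/\gamma\big]_+$, which is exactly \eqref{E:lambda_star}. Imposing the normalization $\int_\Delta \lambda^\star\,d\bdelta = 1$ immediately gives $\int_\Delta[\ell(\cdot)-\mu]_+\,d\bdelta = \gamma$, i.e.\ \eqref{eq:mu-and-gamma}. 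Existence and uniqueness of~$\mu(\bv x,y)$ follow from the fact that the map $\mu \mapsto \int_\Delta [\ell(f_{\btheta}(\bv x+\bdelta),y) - \mu]_+\,d\bdelta$ is continuous and strictly decreasing from $+\infty$ (as $\mu\to-\infty$) down to $0$ (as $\mu\to\operatorname{ess\,sup}\ell$), and is finite-valued because $\ell(f_{\btheta}(\bv x+\cdot),y)\in L^2(\Delta)\subset L^1(\Delta)$ on the compact set~$\Delta$.

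The main obstacle is making rigorous the appearance of $\gamma>0$. The unregularized inner maximization is a linear functional on the set of $L^2$ densities, whose supremum is $\operatorname{ess\,sup}_{\bdelta}\ell(f_{\btheta}(\bv x+\bdelta),y)$ and is generally not attained by any~$\lambda\in L^2$ (no Dirac masses lie in $L^2$). Hence~$\lambda^\star$ can be a solution of \eqref{E:primal_function} only in the sense of a KKT point of a suitably regularized version of the inner maximization; pinpointing~$\gamma$ as the Lagrange multiplier inherited from the non-convex semi-infinite duality framework of \cite{paternain2019constrained,chamon2020probably} used in Proposition~\ref{T:robust_sip}, and verifying that any such~$\gamma>0$ produces a valid KKT pair~$(\mu,\lambda^\star)$ satisfying \eqref{E:lambda_star}--\eqref{eq:mu-and-gamma}, is the technical heart of the proof.
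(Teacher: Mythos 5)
Your proof follows essentially the same architecture as the paper's: decouple over $(\bv x,y)$, then Lagrangian-dualize the inner per-point problem on $L^2(\Delta)$, obtaining stationarity $\ell - \mu + \nu = \gamma\lambda$, using complementary slackness to collapse to the $[\cdot]_+$ form, and recovering \eqref{eq:mu-and-gamma} from the normalization constraint. Your added argument for existence of $\mu$ (continuity and strict monotonicity of $\mu\mapsto\int_\Delta[\ell-\mu]_+\,d\bdelta$) is clean and would be a welcome addition to the paper's writeup. Two discrepancies are worth flagging, one minor and one substantive.

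The minor one: you invoke ``Fubini'' to move the maximization inside the expectation, but the quantity being exchanged is a supremum over $\lambda\in\calP^2$, not an integral, so Fubini does not apply. The paper handles this by verifying that $\calP^2$ is a decomposable space (their Lemma on decomposability) and that the integrand is Carath\'eodory, and then applying Rockafellar--Wets Thm.~14.60 (interchange of infimum and integration over decomposable spaces). This is the correct tool; you should cite it rather than Fubini.

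The substantive one concerns precisely the step you flag as ``the main obstacle.'' You treat $-\tfrac{\gamma}{2}\|\lambda\|^2_{L^2}$ as a regularization term injected directly into the Lagrangian, and conjecture that $\gamma$ is ``inherited from the non-convex semi-infinite duality framework'' of Proposition~\ref{T:robust_sip}. That attribution is off: $\gamma$ does not come from Proposition~\ref{T:robust_sip} at all. The paper's mechanism is simpler and self-contained. Because the maximizer $\lambda^\star$ belongs to $L^2_+$, there is some finite $c\ge 1/\fkm(\Delta)$ (with $\fkm(\Delta)$ the Lebesgue measure of $\Delta$) such that $\|\lambda^\star\|_{L^2}^2\le c$; the lower bound $1/\fkm(\Delta)\le\|\lambda\|_{L^2}^2$ comes from H\"older applied to any normalized density. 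One then adds the constraint $\|\lambda\|_{L^2}^2\le c$ to the inner problem: it is slack (or tight) at $\lambda^\star$, so the optimal value is unchanged, but the feasible region is now an $L^2$-ball slice, turning the linear program into a strongly convex QP. For $c>1/\fkm(\Delta)$ the uniform density on $\Delta$ is a Slater point, so strong duality holds, and $\gamma$ is simply the (finite, nonnegative) multiplier for the $L^2$-ball constraint; for $c=1/\fkm(\Delta)$ the feasible set is a singleton. That is the missing step: without it, adding the quadratic penalty is a heuristic, since nothing guarantees the penalized and original problems share maximizers. Your observation that the unregularized linear problem has no maximizer in $L^2$ in general (its value is $\operatorname{ess\,sup}\ell$) is correct and actually applies to the paper's assumption that a $\lambda^\star\in\calP^2$ achieving the max exists, so neither proof fully resolves that existence question; but conditional on existence, the paper closes the $\gamma$-gap and your proposal does not.
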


\noindent The proof is provided in Appendix~C.  This proposition shows that when~$(\bv x, y) \mapsto \ell\big( f_{\btheta}(\bv x), y \big)\in L^2$, we can obtain a closed-form expression for the distribution $\lambda^\star$ that maximizes the objective of~\eqref{P:primal_sip}.  Moreover, this distribution is proportional to a truncated version of the loss of the classifier.   Note that the assumption that the loss belongs to $L^2$ is mild given that the compactness of~$\calX$, $\calY$, and~$\Delta$ imply that~$L^{p_1} \subset L^{p_2}$ for~$p_1 > p_2$. It is, however, fundamental to obtain the closed-form solution in Proposition~\ref{T:lambda_star} since it allows~\eqref{E:primal_function} to be formulated as a strongly convex constrained problem whose primal solution~\eqref{E:lambda_star} can be recovered from its dual variables~(namely, $\gamma$ and~$\mu$).  To illustrate this result, we consider two particular \emph{suboptimal} choices for the constants $\mu$ and $\gamma$.

\noindent\textbf{Special case I: over-smoothed $\lambda^\star$.}  Consider the case when $\gamma(\bv x,y)$ is taken to be the normalizing constant $\int_\Delta \ell(f_{\btheta}(\bv x+\bdelta),y) d\bdelta$ for each $(\bv x,y)\in\Omega$.  As the loss function $\ell$ is non-negative,~\eqref{eq:mu-and-gamma} implies that $\mu(\bv x,y) = 0$, and the distribution defined in~\eqref{E:lambda_star} can be written as 
\begin{align}
    \lambda(\bdelta|\bv x,y) = \frac{\ell(f_{\btheta}(\bv x + \bdelta), y)}{\int_\Delta \ell(f_{\btheta}(\bv x + \bdelta), y)d\bdelta}, \label{eq:sampling-lambda}
\end{align}
meaning that $\lambda(\bdelta|\bv x,y)$ is exactly proportional to the loss $\ell(f_{\btheta}(\bv x+\bdelta),y)$ on a perturbed copy of the data.  Thus, for this choice of $\gamma$ and $\mu$, the distribution $\lambda$ in~\eqref{eq:sampling-lambda} is an over-smoothed version of the optimal distribution $\lambda^\star$.  In our experiments, we will use this over-smoothed approximation of $\lambda^\star$ to derive an MCMC-style sampler, which yields state-of-the-art performance on standard benchmarks.

\noindent\textbf{Special case II: under-smoothed $\lambda^\star$.}  It is also of interest to consider the case in which $\gamma$ approaches zero.  In the proof of Proposition~\ref{T:lambda_star}, we show that the value of $\mu$ is fully determined by $\gamma$ and that $\gamma$ is directly related to the smoothness of the optimal distribution; in fact, $\gamma$ is equivalent to a bound on the $L^2$ norm of $\lambda^\star$.  In this way, as we take $\gamma$ to zero, we find that $\mu$  approaches $\max_{\bdelta\in\Delta}\ell(f_{\btheta}(\mathbf x+\bm\delta),y)$, meaning that the distribution is truncated so that mass is only placed on those perturbations $\bdelta$ which induce the maximum loss.  Thus, in the limit, $\lambda$ approaches an atomic distribution concentrated entirely at a perturbation $\delta^\star$ that maximizes the loss.  Interestingly, this is the same distribution that would be needed to recover the solution to the inner maximization as in~\eqref{P:robust}.  This highlights the fact that although recovering the optimal $\delta^\star$ in~\eqref{P:main} would require $\lambda^\star$ to be atomic, the condition that $\gamma > 0$ means that $\lambda^\star$ need not be atomic.

These two cases illustrate the fundamental difference between~\eqref{P:robust} and~\eqref{P:primal_sip}: Whereas in~\eqref{P:robust} we search for worst-case perturbations, in~\eqref{P:primal_sip} we seek a method to sample perturbations $\bm\delta$ from the perturbation distribution $\lambda^\star$.  Thus, given a method for sampling $\delta\sim\lambda^\star(\bm\delta|\mathbf x,y)$, the max in~\eqref{P:robust} can be replaced by an expectation, allowing us to consider the following optimization problem:
\begin{prob}\label{P:replace-max}
    P_\textup{R}^\star = \min_{\btheta \in \Theta} \E_{(\bv x,y)\sim\calD} \left[ \E_{\bdelta\sim\lambda^\star(\bdelta|\bv x, y)} \left[ \ell(f_{\bm\theta}(\bv x+\bdelta,y) \right] \right].
\end{prob}
Notice that crucially this problem is non-composite in the sense that it no longer contains an inner maximization.  To this end, in Section~\ref{S:algorithm}, we propose a scheme that can be used to sample from a close approximation of $\lambda^\star$ toward evaluating the inner expectation in~\eqref{P:replace-max}.

\section{Solving the constrained learning problem}
\label{S:csl}

So far, we have argued that~\eqref{P:main} captures the problem of finding the most robust model with high nominal performance and we have shown the the minimax objective of~\eqref{P:main} can be rewritten as a stochastic optimization problem over perturbation distributions.  In this section, we address the distinct yet related issue of satisfying the constraint in~\eqref{P:main}, which is a challenging task in practice given the statistical and potentially non-convex nature of the problem.  Further complicating matters is the fact that by assumption we have access to the data distribution~$\calD$ only through samples~$(\bv x, y) \sim \calD$, which means that in practice we cannot evaluate either of the expectations in~\eqref{P:main}.  To overcome these obstacles, given a dataset $\{(\mathbf x_n, y_n)\}_{n=1}^N$ sampled i.i.d.\ from $\calD$, our approach is to use duality to approximate~\eqref{P:main} by the following empirical, unconstrained saddle point problem
\begin{prob}[$\widehat{\textup{DI}}$]\label{P:empirical_dual}
	\hat{D}^\star = \max_{\nu \geq 0}\ \min_{\btheta \in \Theta}\ \frac{1}{N} \sum_{n = 1}^n \left[
	   	\max_{\dinD}\ \ell\big( f_{\btheta}(\bv x_n + \bdelta), y_n \big)
			+ \nu \left[ \ell\big( f_{\btheta}(\bv x_n), y_n \big) - \rho \right]
	\right]
		\text{.}
\end{prob}
Conditions under which solutions of~\eqref{P:empirical_dual} are~(probably approximately) near-optimal \emph{and} near-feasible for~\eqref{P:main} were obtained in~\cite{chamon2020probably}. As one would expect, these guarantees only hold when the objective and constraint of~\eqref{P:main} are learnable individually. As we discussed in Section~\ref{S:problem}, this is known to hold in a variety of scenarios~(e.g., when the Rademacher complexity or VC-dimension is bounded), although obtaining more general results remains an area of active research~\cite{awasthi2020adversarial, yin2019rademacher, cullina2018pac, montasser2020efficiently, montasser2019vc}. In what follows, we formalize these generalization results in our setting, starting with the learning theoretic assumptions we require on the objective and constraint.

\textbf{Learning theoretic assumptions for \texorpdfstring{\eqref{P:main}}{(P-CON)}.}
We first assume that the parameterization space $\Theta$ is sufficiently expressive (Assumption~\ref{A:parametrization} and that there exists parameters $\btheta\in\Theta$ that strictly satisfy the nominal performance constraint (Assumption~\ref{A:slater}).  We also assume that uniform convergence holds for the objective and constraint (Assumption~\ref{A:empirical}).
\begin{assumption}\label{A:parametrization}
The parametrization~$f_{\btheta}$ is rich enough so that for each~$\btheta_1, \btheta_2 \in \Theta$ and~$\beta \in [0,1]$, there exists~$\btheta \in \Theta$ such that $\sup_{\bv x \in \calX}\ \abs{\beta f_{\btheta_1}(\bx) + (1-\beta) f_{\btheta_2}(\bx)
		- f_{\btheta}(\bx)} \leq \alpha
		\text{.}$
\end{assumption}

\begin{assumption}\label{A:slater}
There exists~$\btheta^\prime \in \Theta$ such that~$\E_{\calD} \left[ \ell\big(f_{\btheta^\prime}(\bv x), y\big) \right] < \rho - M\alpha$.
\end{assumption}

\begin{assumption}\label{A:empirical}
	There exists~$\zeta_R(N), \zeta_N(N) \geq 0$ monotonically decreasing in~$N$ such that~$\forall\btheta \in \Theta$:\vspace{-0.3em}
	\begin{subequations}\label{E:emp_approximation}
	\begin{align}
		\abs{\E_{(\bx,y) \sim \calD} \!\big[ \max_{\dinD}\ \ell\big( f_{\btheta}(\xplusd), y \big) \big]
			- \frac{1}{N} \sum_{n = 1}^{N} \max_{\dinD}\ \ell\big( f_{\btheta}(\bx_n +\bdelta), y_n \big)} &\leq \zeta_R(N)
			\text{ w.p. } 1-\delta
			\label{E:emp_approximation_robust}
		\\
		\abs{\E_{(\bx,y) \sim \calD} \!\big[ \ell(f_{\btheta}(\bv x), y) \big]
			- \frac{1}{N} \sum_{n = 1}^{N} \ell(f_{\btheta}(\bv x_n), y_n)} &\leq \zeta_N(N)
			\text{ w.p. } 1-\delta
			\label{E:emp_approximation_nominal}
	\end{align}
	\end{subequations}
\end{assumption}

\noindent One natural question to ask is whether the bounds in~\eqref{E:emp_approximation_robust} and~\eqref{E:emp_approximation_nominal} hold in practice.  We note that in the non-adversarial bound~\eqref{E:emp_approximation_nominal} has been shown to hold for a wide variety of hypothesis classes, including DNNs~\cite{bartlett2017spectrally,shalev2014understanding}.  And although these classical results do not imply the robust uniform convergence property~\eqref{E:emp_approximation_robust}, there is a growing body of evidence which suggests that this property does in fact hold for the function class of DNNs~\cite{yin2019rademacher,khim2018adversarial,tu2019theoretical}.

\textbf{Near-optimality and near-feasibility of \texorpdfstring{\eqref{P:empirical_dual}}{(DI)}.}
By combining these assumptions with the techniques used in~\cite{chamon2020probably}, we can explicitly bound the empirical duality gap (with high probability) and characterize the feasibility of the empircal dual optimal solution for~\eqref{P:main}.
\begin{proposition}[The empirical dual of~\eqref{P:main}]\label{T:dual}
Let~$\ell(\cdot,y)$ be a convex function for all~$y \in \calY$. Under Assumptions~\ref{A:parametrization}--\ref{A:empirical}, it holds with probability~$1-5\delta$ that

\begin{enumerate}
	\item $\big|P^\star - \hat{D}^\star\big| \leq M \alpha + (1 + \bar{\nu}) \max(\zeta_R(N), \zeta_N(N))$; and
	\item There exists~$\btheta^\dagger \in \argmin_{\btheta \in \Theta} \hat{L}(\btheta, \hat{\nu}^\star)$ such that~$\E_{(\bv x,y) \sim \calD} \left[ \ell\big(f_{\btheta^\dagger}(\bv x), y\big) \right] \leq \rho + \zeta_N(N)$.
\end{enumerate}
\noindent Here, $\hat{\nu}^\star$ denotes a solution of~\eqref{P:empirical_dual}, $\nu^\star$ denotes an optimal dual variable of~\eqref{P:main} solved over~$\bar\calH = \conv(\calH)$ instead of~$\calH$, and~$\bar{\nu} = \max(\hat{\nu}^\star,\nu^\star)$. Additionally, for any interpolating classifier~$\btheta^\prime$, i.e.\ such that $\E_{(\bv x,y) \sim \calD} \left[ \ell\big(f_{\btheta^\prime}(\bv x), y\big) \right] = 0$, it holds that
\begin{equation}\label{E:nu_bound}
	\nu^\star \leq \rho^{-1} \E_{(\bv x, y) \sim \calD} \Big[
    	\max_{\dinD}\ \ell\big( f_{\btheta^\prime}(\xplusd), y \big)
    \Big]
    	\text{.}
\end{equation}
\end{proposition}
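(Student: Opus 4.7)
The plan is to adapt the constrained-learning duality argument of~\cite{chamon2020probably} to the adversarial setting. Introduce the population and empirical Lagrangians
\begin{equation*}
L(\btheta, \nu) = \E_{(\bv x,y)\sim\calD}\!\left[ \max_{\dinD} \ell(f_{\btheta}(\xplusd), y) \right] + \nu\!\left( \E_{(\bv x,y)\sim\calD}[\ell(f_{\btheta}(\bv x), y)] - \rho \right),
\end{equation*}
with $\hat L(\btheta,\nu)$ its sample-average analog; write $D^\star$ for the population dual of~\eqref{P:main} over $\Theta$ and $\bar P^\star$ for the primal value of~\eqref{P:main} relaxed to $\bar{\calH} = \conv(\calH)$.

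Step 1 establishes strong duality for the convex relaxation. Convexity of $\ell(\cdot,y)$ and of the pointwise supremum in $\bdelta$ make both the objective and the constraint of~\eqref{P:main} convex on $\bar{\calH}$, and Assumption~\ref{A:slater} supplies a Slater point (with the $M\alpha$ slack absorbed below), so $\bar P^\star = \max_{\nu\geq 0}\min_{f\in\bar{\calH}} L(f,\nu)$. Step 2 pays the parametrization price: by Assumption~\ref{A:parametrization} and $M$-Lipschitzness of $\ell$, every element of $\bar{\calH}$ is within sup-norm $\alpha$ of an element of $\calH$, giving $|P^\star - \bar P^\star|\leq M\alpha$ and symmetrically $|D^\star - \max_{\nu}\min_{f\in\bar{\calH}} L(f,\nu)|\leq M\alpha$, hence $|P^\star - D^\star|\leq M\alpha$. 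Step 3 controls the empirical gap: by Assumption~\ref{A:empirical} and a union bound, for each $\nu\geq 0$ with probability $1-2\delta$,
\begin{equation*}
\sup_{\btheta\in\Theta}\bigl|L(\btheta,\nu)-\hat L(\btheta,\nu)\bigr| \leq \zeta_R(N)+\nu\,\zeta_N(N) \leq (1+\nu)\max(\zeta_R(N),\zeta_N(N)).
\end{equation*}
Evaluating this deviation bound at $\nu^\star$ and $\hat\nu^\star$ and passing through the inner minima yields $|D^\star-\hat D^\star|\leq (1+\bar\nu)\max(\zeta_R(N),\zeta_N(N))$. A triangle inequality with Step 2 delivers the first conclusion.

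For the feasibility claim, I would take $\btheta^\dagger\in\argmin_{\btheta\in\Theta}\hat L(\btheta,\hat\nu^\star)$ and split into two sub-cases. If $\hat\nu^\star>0$, complementary slackness at the empirical saddle point forces the empirical nominal risk at $\btheta^\dagger$ to equal $\rho$. If $\hat\nu^\star=0$, the Slater point (which remains empirically feasible up to $\zeta_N(N)$) furnishes a feasible competitor whose empirical robust loss dominates $\hat L(\btheta^\dagger,0)$ only if the empirical nominal constraint is inactive at $\btheta^\dagger$; hence the empirical nominal risk is bounded by $\rho$ up to $\zeta_N(N)$. Either way,~\eqref{E:emp_approximation_nominal} lifts this bound to the population, giving $\E_{\calD}[\ell(f_{\btheta^\dagger}(\bv x),y)]\leq \rho+\zeta_N(N)$. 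The main obstacle will be exactly this near-feasibility argument: $\btheta^\dagger$ is not itself the minimizer of any constrained program, and the $\hat\nu^\star=0$ case requires invoking Slater's condition to rule out infeasibility by contradiction.

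Finally,~\eqref{E:nu_bound} follows from weak duality at the interpolating classifier $\btheta^\prime$. Since $\E_{\calD}[\ell(f_{\btheta^\prime}(\bv x),y)]=0$, one has $L(\btheta^\prime,\nu^\star)=\E_{\calD}\!\left[\max_{\dinD}\ell(f_{\btheta^\prime}(\xplusd),y)\right]-\nu^\star\rho$; non-negativity of the robust loss gives $\bar P^\star\geq 0$, and combining this with $\bar P^\star = \min_{f\in\bar{\calH}} L(f,\nu^\star)\leq L(\btheta^\prime,\nu^\star)$ and rearranging yields exactly~\eqref{E:nu_bound}.
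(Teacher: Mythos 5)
Your overall architecture matches the paper's: a parametrization gap (bounding $|P^\star - D^\star|$), an empirical gap (bounding $|D^\star - \hat D^\star|$ and establishing near-feasibility), then a triangle inequality plus union bound. The empirical-gap reasoning in your Step~3 is essentially the paper's argument, and your derivation of the bound~\eqref{E:nu_bound} via $\bar P^\star \ge 0$, strong duality over $\bar\calH$, and evaluating the Lagrangian at the interpolating classifier is correct and clean (the appendix does not even spell this out, so it's a welcome addition).

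There is, however, a genuine gap in your Step~2. You claim $|P^\star - \bar P^\star| \le M\alpha$ where $\bar P^\star$ is~\eqref{P:main} relaxed to $\bar\calH = \conv(\calH)$ \emph{with the original constraint level $\rho$}, and then argue symmetrically for the duals. First, enlarging the hypothesis class gives $\bar P^\star \le P^\star$ for free, but the reverse direction fails: if $\phi^\star \in \bar\calH$ attains $\bar P^\star$ and $f_\theta \in \calH$ approximates it within sup-norm $\alpha$, then $\E[\ell(f_\theta(\bx),y)] \le \rho + M\alpha$, so $f_\theta$ may \emph{violate} the constraint, and there is no guarantee of a nearby feasible $\calH$-element — so $|P^\star - \bar P^\star| \le M\alpha$ is not provable by this route. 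Second, even granting that bound, your chain $|P^\star - \bar P^\star| \le M\alpha$ and $|D^\star - \bar P^\star| \le M\alpha$ only yields $|P^\star - D^\star| \le 2M\alpha$, not $M\alpha$. The paper's fix (Lemma D.1) is to pass to $\bar\calH$ while \emph{tightening} the constraint to $\rho - M\alpha$ (this is exactly why Assumption~\ref{A:slater} is stated with the $M\alpha$ margin): then any approximating $f_\theta$ of the $\bar\calH$-optimal $\tilde\phi^\star$ remains feasible for the original $\rho$, so $P^\star \le \E[\max_\delta \ell(f_\theta)]$, and one direct Lipschitz bound finishes the one-sided inequality $D^\star \ge P^\star - M\alpha$ (weak duality gives $D^\star \le P^\star$). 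Your parenthetical "with the $M\alpha$ slack absorbed below" gestures at this but the subsequent argument never cashes it in.

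Your near-feasibility argument also departs from the paper in a way that weakens it. The paper characterizes the superdifferential of the empirical dual function $\hat d(\nu) = \min_\theta \hat L(\theta,\nu)$ via Danskin's theorem: $\partial \hat d(\hat\nu^\star)$ is the convex hull of the empirical constraint slacks over the minimizers, so dual optimality forces at least one minimizer $\btheta^\dagger$ to have slack $\le 0$, uniformly across both cases $\hat\nu^\star = 0$ and $\hat\nu^\star > 0$ — no case split is needed. Your $\hat\nu^\star > 0$ branch via "complementary slackness forces the empirical nominal risk to equal $\rho$" is slightly too strong (Danskin only gives $0$ in a convex hull, so some minimizer has slack $\le 0$, not $= 0$; that direction is harmless), but your $\hat\nu^\star = 0$ branch, which appeals to a Slater-point competitor "dominating" only when the constraint is inactive, is not a well-formed argument: with $\hat\nu^\star = 0$ the empirical Lagrangian is just the empirical robust loss and has no a priori reason to produce a minimizer satisfying the nominal constraint; you need the supergradient/Danskin reasoning to establish it.
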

\noindent The proof is provided in Appendix~D.  At a high level, Proposition~\ref{T:dual} tells us that it is possible to learn robust models with high clean accuracy using the empirical dual problem in~\eqref{P:empirical_dual} at little cost to the sample complexity.  This means that seeking a robust classifier with a given nominal performance is~(probably approximately) equivalent to seeking a classifier that minimizes a combination of the nominal and adversarial empirical loss.  Notably, the majority of past approaches for solving~\eqref{P:main} cannot be endowed with similar guarantees in the spirit of Proposition~\ref{T:dual}.  Indeed, while the objective resembles a penalty-based formulation, notice that~$\nu$ is an \emph{optimization variable} rather than a fixed hyperparameter.  Concretely, the magnitude of this dual variable $\nu$ quantifies how hard it is to learn an adversarially robust model while maintaining strong nominal performance.  Though seemingly innocuous, this caveat is the difference between guaranteeing generalization only on the aggregated loss and guaranteeing generalization jointly for the objective value and constraint feasibility.

\vspace{-0.5em}
\begin{algorithm}[t!]
    \centering
    \begin{algorithmic}[1]
		\Statex Initialize $\btheta \gets \btheta_0$ and~$\nu \gets 0$

		\Repeat
	        \For{Batch $\{(\bv x_i, y_i)\}_{i=1}^m$}
				\State $\bdelta_i \gets \bv 0$, for $i = 1,\dots,m$

				\For{$L$ steps}
					\State $\displaystyle
						U_{i} \gets \log\Big[ \ell_\text{pert}\big( f_{\btheta}(\bv x_i + \bdelta_i ), y_i \big) \Big]$,
					for $i = 1,\dots,m$

					\State $\displaystyle
						\bdelta_{i} \gets \proj_\Delta \Big[
							\bdelta_{i} + \eta \nabla_{\bdelta_i} U_{i} + \sqrt{2\eta T} \bxi_i
						\Big]$,
					where~$\bxi_i \sim \text{Laplace}(0,I)$ and $i = 1,\dots,m$
				\EndFor

				\State $\displaystyle
					\btheta \gets \btheta - \frac{\eta_p}{m}\sum_{i=1}^m \nabla_{\btheta}
						\big[ \ell_\text{ro}\big( f_{\btheta}(\bv x_i + \bdelta_{i}), y_i \big)
							+ \nu \ell_\text{nom}\big( f_{\btheta}(\bv x_i), y_i \big) \big]$
			\EndFor

			\State $\displaystyle
				\nu \gets \left[ \nu + \eta_d
					\left( \frac{1}{N} \sum_{n = 1}^N \ell\big( f_{\btheta}(\bv x_n), y_n \big) - \rho \right)
				\right]_+$
		\Until{convergence}
    \end{algorithmic}
    \caption{Semi-Infinite Dual Adversarial Learning (DALE)}
    \label{L:algorithm}
\end{algorithm}

\section{Dual robust learning algorithm}
\label{S:algorithm}\vspace{-0.2em}

Under the mild assumption that $(\bv x,y)\mapsto \ell(f_{\btheta}(\bv x),y) \in L^2$, Propositions~\ref{T:robust_sip}, \ref{T:lambda_star}, and \ref{T:dual} allow us to transform~\eqref{P:main} into the following \textbf{D}ual \textbf{A}dversarial \textbf{LE}arning problem
\begin{prob}[\textup{P-DALE}]\label{P:equivalent}
	\hat{D}^\star \triangleq \max_{\nu \geq 0}\ \min_{\btheta \in \Theta}\ %
		\frac{1}{N} \sum_{n = 1}^n \Big[
	    	\E_{\bdelta_n} \left[ \ell(f_{\btheta}(\bv x_n + \bdelta_n), y_n) \right]
				+ \nu \left[ \ell\big( f_{\btheta}(\bv x_n), y_n \big) - \rho \right]
		\Big]
\end{prob}
where~$\bdelta_n \sim \lambda_n^\star := \gamma_n^{-1} \big[ \ell(f_{\btheta}(\bv x_n + \bdelta_n), y_n) - \mu_n \big]_+$ for each $n\in\{1, \dots, N\}$ and~$\gamma_n > 0$ and~$\mu_n$ are the constants specified in Proposition~\ref{T:lambda_star}.  Note that this formulation is considerably more amenable than~\eqref{P:main}. Indeed, it is (i)~empirical and therefore does not involve unknown statistical quantities such as~$\calD$; (ii)~unconstrained and therefore more amendable to gradient-based optimization techniques; and (iii)~its objective does not involve a challenging maximization problem in view of the closed-form characterization of~$\lambda^\star$ in Proposition~\ref{T:lambda_star}.  In fact, for models that are linear in~$\btheta$ but nonlinear in the input~(e.g., kernel models or logistic regression), this implies that we can transform a non-convex, composite optimization problem~\eqref{P:main} into a convex problem~\eqref{P:equivalent}. 

Nevertheless, for many modern ML models such as CNNs, \eqref{P:equivalent} remains a non-convex program in~$\btheta$. And while there is overwhelming theoretical and empirical evidence that stochastic gradient-based algorithms yield good local minimizers for such overparametrized problems~\cite{soltanolkotabi2018theoretical, zhang2016understanding, arpit2017closer, ge2017learning, brutzkus2017globally}, the fact remains that solving~\eqref{P:equivalent} requires us to evaluate an expectation with respect to~$\lambda^\star$, which is challenging due to the fact that $\mu_n$ and $\gamma_n$ are not known a priori.  In the remainder of this section, we propose a practical algorithm to solve~\eqref{P:equivalent} based on the approximation discussed in Section~\ref{S:dual_robust_learning}.

\textbf{Sampling from the optimal distribution \texorpdfstring{$\lambda^\star_n$}{}.}
Although Proposition~\ref{T:lambda_star} provides a characterization of the optimal distribution $\lambda_n^\star$, obtaining samples from $\lambda_n^\star$ can still be challenging in practice, especially when the dimension of~$\bdelta_n$ is large~(e.g., for image-classification tasks).  Moreover, in practice the value of~$\gamma$ for which~\eqref{E:lambda_star} is a solution of~\eqref{E:primal_function} is not known \emph{a priori} and can be arbitrarily close to zero, making $\lambda^\star_n$ discontinuous and with a potentially vanishing support.  Fortunately, these issues can be addressed by using Hamiltonian Monte Carlo~(HMC) methods, which leverage the geometry of the distribution to overcome the curse of dimensionality.  

In particular, we propose to use a projected Langevin Monte Carlo~(LMC) sampler~\cite{bubeck2015finite}.  To derive this sampler, we first make a simplifying assumption: Rather than seeking the optimal constants $\gamma$ and $\mu$, we consider the over-smoothed approximation of $\lambda_n^\star$ derived in~\eqref{eq:sampling-lambda}, wherein the probability mass allocated to a particular perturbation $\bdelta\in\Delta$ is proportional to the loss $\ell(f_{\btheta}(\bv x+\bdelta),y)$.  We note that while this choice of $\lambda_n^\star$ may not be optimal, the sampling scheme that we derive under this assumption yields strong numerical performance.  Furthermore, even if we knew the true values of $\gamma_n$ and $\mu_n$, the resulting distribution for $\mu_n\neq 0$ would be discontinuous and sampling from such distributions in high-dimensional settings is challenging in and of itself (see, e.g., \cite{nishimura2020discontinuous}).

\begin{figure}
    \centering
    \includegraphics[width=\textwidth]{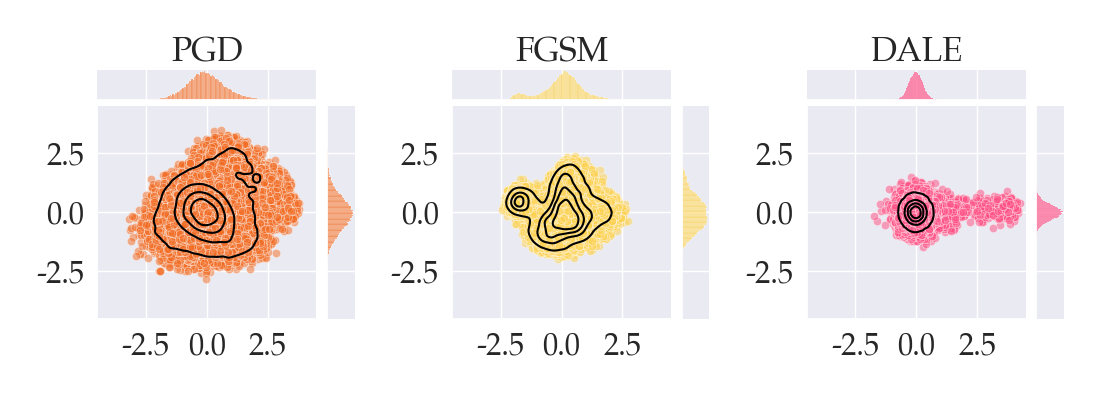}
    \caption{\textbf{Visualizing the distribution of adversarial perturbations.}  In this figure, we visualize the distribution of adversarial perturbations by projecting the perturbations generated by PGD, FGSM, and DALE onto their first two principal components.  The first and second principal components are shown on the $x$- and $y$-axes respectively.  Notice that DALE varies much less along the second principal component vis-a-vis PGD and FGSM; this indicates that DALE tends to focus more on directions in which the data varies most, indicating that it finds stronger adversarial perturbations.}
    \label{fig:pca-mnist}
\end{figure}

Given this approximate characterization of the optimal distribution, the following Langevin iteration can be derived directly from the commonly-used leapfrog simpletic integrator for the Hamiltonian dynamics induced by the distribution $\lambda_n$ defined in~\eqref{eq:sampling-lambda} (see Appendix~\ref{app:sampler} for details).  This, in turn, yields the following update rule:
\begin{align}
    \bdelta &\gets \Pi_{\Delta} \Big[
							\bdelta + \eta \sign\left[ \nabla_{\bdelta} \log \Big[ \ell_\text{pert}(f_{\btheta}(\bv x + \bdelta), y)\Big] + \sqrt{2\eta T} \bxi
						\right]\Big]
\end{align}
where~$\bxi \sim \textup{Laplace}(\bv 0, \bI)$.  In this notation, $T>0$ and $\eta > 0$ are constants which can be chosen as hyperparameters, and $\ell_\text{pert}$ is a loss functions for the perturbation.  The resulting algorithm is summarized in Algorithm~\ref{L:algorithm}.  Notice that Algorithm~\ref{L:algorithm} accounts for scenarios in which the losses associated with the adversarial performance~($\ell_\text{ro}$), the perturbation~($\ell_\text{pert}$), and the nominal performance~($\ell_\text{nom}$) are different. It can therefore learn from perturbations that are adversarial for a different loss than the one used for training the model~$\btheta$. This generality allows it to tackle different applications, e.g., by replacing the adversarial error objective in~\eqref{P:main} by a measure of model invariance~(e.g., ACE in~\cite{zhang2019theoretically}). This feature can also be used to show that existing adversarial training procedures can be seen as approximations of Algorithm~\ref{L:algorithm}~(see~Appendix~A).  We refer the reader to Appendix~\ref{app:conv} for further discussion of the convergence properties of Algorithm~\ref{L:algorithm}.

\vspace{-0.5em}

\section{Experiments} \label{sect:experiments}

In this section, we include an empirical evaluation of the DALE algorithm.  In particular, we consider two standard datasets: MNIST and CIFAR-10.  For MNIST, we train four-layer CNNs and set $\Delta = \{\bdelta : \norm{\bdelta}_\infty \leq 0.3\}$; for CIFAR-10, we train ResNet-18 models and set $\Delta = \{\bdelta : \norm{\bdelta}_\infty \leq 8/255\}$.  All hyperparameters and performance metrics are chosen with respect to the robust accuracy of a PGD adversary evaluated on a small hold-out validation set.  Further details concerning hyperparameters and architectures are provided in Appendix~\ref{sect:hyperparams}.  We also provide additional experiments in Appendix~\ref{app:further-exps}.

\textbf{Evaluating the adversarial robustness of DALE.}  We begin our empirical evaluation by comparing the adversarial robustness of DALE with numerous state-of-the-art baselines in Table~\ref{tab:mnist-and-cifar-linf}.  To evaluate the robust performance of these classifiers, we use a 1-step and an $L$-step PGD adversary to evaluate robust performance; we denote these adversaries by FGSM and PGD$^{L}$ respectively.  Notice that on CIFAR-10, DALE with $\rho=0.8$ is the only method to achieve higher than 85\% clean accuracy and 50\% adversarial accuracy against PGD$^{20}$.  Furthermore, when DALE is run with $\rho=1.1$, we see that it achieves nearly 52\% adversarial accuracy, which is a significant improvement over all baselines.  In Appendix~\ref{app:further-exps}, we provide a more complete characterization of the role of $\rho$ in controlling the trade-off between robustness and accuracy.

\begin{table}
\centering
\caption{\textbf{Adversarial robustness on MNIST and CIFAR-10.}  Test accuracies of DALE (Algorithm~\ref{L:algorithm}) and state-of-the-art baselines on MNIST and CIFAR-10.  On both datasets, DALE surpasses the baselines against both adversaries, while simultaneously maintaining high nominal performance.} \vspace{5pt}
\label{tab:mnist-and-cifar-linf}
\begin{tabular}{ccccccccc}
\toprule
& & \multicolumn{3}{c}{MNIST} & \multicolumn{3}{c}{CIFAR-10} \\ \cmidrule(lr){3-5} \cmidrule(lr){6-8}
\textbf{Algorithm} & $\bm\rho$ & \textbf{Clean} & \textbf{FGSM} & \textbf{PGD$^{10}$} & \textbf{Clean} & \textbf{FGSM} & \textbf{PGD$^{20}$} \\
\midrule
ERM & - & 99.3 & 14.3 & 1.46 & 94.0 & 0.01 & 0.01 \\ \midrule
FGSM & - & 98.3 & 98.1 & 13.0 & 72.6 & 49.7 & 40.7  \\
PGD & - & 98.1 & 95.5 & 93.1 & 83.8 & 53.7 & 48.1 \\
CLP & - & 98.0 & 95.4 & 92.2 & 79.8 & 53.9 & 48.4 \\
ALP & - & 98.1 & 95.5 & 92.5 & 75.9 & 55.0 & 48.8 \\
TRADES & - & 98.9 & 96.5 & 94.0 & 80.7 & 55.2 & 49.6 \\
MART & - & 98.9 & 96.1 & 93.5 & 78.9 & 55.6 & 49.8 \\
\midrule
\rowcolor{Gray} DALE & 0.5 & 99.3 & 96.6 & 94.0 & 86.0 & 54.4 & 48.4 \\
\rowcolor{Gray} DALE & 0.8 & 99.0 & 96.9 & 94.3 & 85.0 & 55.4 & 50.1 \\
\rowcolor{Gray} DALE & 1.0 & 99.1 & 97.7 & 94.5 & 82.1 & 55.2 & 51.7 \\
\bottomrule
\end{tabular}
\end{table}

\textbf{Visualizing the distribution of adversarial perturbations.}  To visualize the distribution over perturbations generated by DALE, we use principal component analysis~(PCA) to embed perturbations into a two-dimensional space.  In particular, we performed PCA on the MNIST training set to extract the first two principal components of the images; we then projected the perturbations~$\bdelta\in\Delta$ generated by PGD, FGSM, and DALE in the last iteration of training onto these principal components.  A plot of these projections is shown in Figure~\ref{fig:pca-mnist}, in which the first and second principal components are shown on the $x$- and $y$-axes respectively.  Notice that the perturbations generated by FGSM are spread out somewhat unevenly in this space. In contrast, the perturbations found by PGD and DALE are spread out more evenly. Furthermore, the perturbations generated by PGD and FGSM vary more along the second principal component~($y$-axis) than the first~($x$-axis) relative to DALE. Since the first component describes the direction of largest variance of the data, this indicates that DALE tends to find perturbations that place more mass on the direction in which the data varies most.

\begin{figure}
    \centering
    \includegraphics[width=\textwidth]{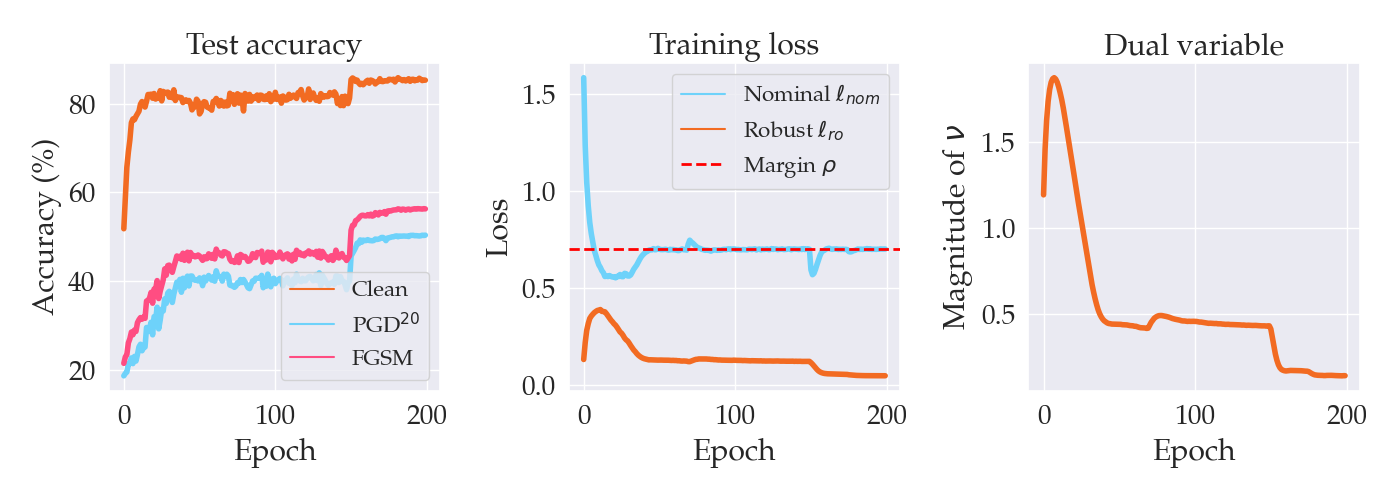}
    \caption{\textbf{Tracking the dual variables.}  Left: the clean and robust test accuracies of a ResNet-18 classifier trained on CIFAR-10 using DALE.  Middle: the training losses for DALE.  Right: The magnitude of the dual variable during training.}
    \label{fig:primal-dual-tracking}
\end{figure}

\textbf{Tracking the dual variables.}  In Figure~\ref{fig:primal-dual-tracking} we study the performance of DALE over the course of training.  In the leftmost panel, we plot the test accuracy on clean samples and on adversarially perturbed samples.  Notably, this classifier exceeds 50\% robust accuracy against PGD$^{20}$ as well as 85\% clean accuracy; these figures are higher than either of the corresponding metrics for any of the baselines in Table~\ref{tab:mnist-and-cifar-linf}, indicating that our method is more effectively able to mitigate the trade-off between robustness and accuracy.  In the middle panel of Figure~\ref{fig:primal-dual-tracking}, we track the nominal and robust training losses, and in the rightmost panel, we show the magnitude of the dual variable $\nu$.  Observe that at the onset of training, the constraint in~\eqref{P:main} is not satisfied, as the blue curve is above the red dashed-line.  In response, the dual variable places more weight on the nominal loss term in~\eqref{P:equivalent}.  After several epochs, this reweighting forces constraint satisfaction, after which the dual variable begins to decrease, which in turn decreases the weight on the nominal objective and allows the optimizer to focus on minimizing the robust loss.

\begin{wraptable}{r}{5.2cm}
    \centering
    \caption{\textbf{Regularized DALE.}  Test accuracies attained by running DALE without the dual-update step in line~10 of Algorithm~\ref{L:algorithm}.}\vspace{5pt}
    \label{tab:regularization}
    \begin{tabular}{cccc} \toprule
         $\bnu$ & \textbf{Clean} & \textbf{FGSM} & \textbf{PGD$^{20}$} \\ \midrule
         0.1 & 86.4 & 55.3 & 49.5 \\ 
         0.2 & 86.8 & 54.2 & 49.3 \\
         0.3 & 86.3 & 54.8 & 48.2 \\
         0.4 & 86.2 & 54.6 & 47.3 \\
         0.5 & 86.5 & 54.3 & 46.8 \\
         0.6 & 85.7 & 53.3 & 46.4 \\
         0.7 & 85.8 & 53.3 & 46.0 \\
         0.8 & 84.9 & 53.1 & 45.9 \\
         0.9 & 85.0 & 53.4 & 45.7 \\
         1.0 & 84.5 & 52.7 & 45.8 \\ \bottomrule
    \end{tabular}
    \vspace{-10mm}
\end{wraptable}

\textbf{Regularization vs. primal-dual.}  Our final ablation study is to consider the impact of performing the dual-update step in line~10 of Algorithm~\ref{L:algorithm}.  In particular, in Table~\ref{tab:regularization}, we record the performance of DALE when Algorithm~\ref{L:algorithm} is run without the dual update step.  This corresponds to running DALE with a fixed weight $\nu$. Notice that although our method reaches the same level of robust performance as MART and TRADES, it does not match the performance of the DALE classifiers in Table~\ref{tab:mnist-and-cifar-linf}.  This indicates that the strong robust performance of our algorithm relies on adaptively updating the dual variable over the course of training. 

\vspace{-0.5em}
\section{Conclusion}

In this paper, we studied robust learning from a constrained learning perspective.  We proved an equivalence between the standard adversarial training paradigm and a stochastic optimization problem over a specific, non-atomic distribution.  This insight provided a new perspective on robust learning and engendered a Langevin MCMC approach for adversarial robustness.  We experimentally validated that this algorithm outperforms the state-of-the-art on standard benchmarks.  Notably, our method simultaneously achieved greater than 50\% adversarial accuracy and greater than 85\% clean accuracy on CIFAR-10, which represents a significant improvement over the previous state-of-the-art.

\newpage
\section{Acknowledgements and disclosure of funding}

The authors would like to thank Juan Cervino and Samuel Sokota for their helpful feedback.

The research of Alexander Robey and Hamed Hassani is supported by NSF Grants 1837253, 1943064, 1934876, AFOSR Grant FA9550-20-1-0111, and DCIST-CRA.  Alexander Robey, George J. Pappas, Luiz F.\ O.\ Chamon, and Alejandro Ribeiro are all supported by the NSF-Simons Program on the Mathematics of Deep Learning.  Alexander Robey and George J. Pappas are supported by ARL CRA DCIST W911NF-17-2-0181 and NSF 2038873.

\newpage
\bibliography{bibliography}
\bibliographystyle{unsrt}

\newpage
\appendix
\newpage

\section{Connections to other problems} \label{app:connections}

In the Introduction, we claimed that several well-known formulations for robust learning can be recovered as approximations of~\eqref{P:primal_sip} or Algorithm~\ref{L:algorithm}. In what follows, we explore these connections in three directions: (i)~fixed, sub-optimal choices of the perturbation distribution~$\lambda$ in~\eqref{E:primal_function}; (ii)~limiting cases of the projected LMC dynamics used in Algorithm~\ref{L:algorithm}; (iii)~modifications of the empirical dual problem~\eqref{P:empirical_dual}.

\subsection{Sub-optimal perturbation distributions}

For a fixed perturbation distribution~$\lambda$ in~\eqref{E:primal_function}, \eqref{P:primal_sip} can be thought of as a random data augmentation procedure~\cite{holmstrom1992using}. For instance, for~$\lambda = \calN(\bzero, \bSigma)$, \eqref{P:primal_sip} becomes
\begin{equation*}
    \minimize_{\btheta\in\Theta}\ \E_{(\bv x,y) \sim \calD}
    	\Big[ \E_{\bdelta\sim\calN(\bzero, \bSigma)} \big[ \ell\big( f_{\btheta}(\bv x + \bdelta), y \big) \big] \Big]
    		\text{.}
\end{equation*}
Indeed, the authors of~\cite{lopes2019improving, rusak2020simple} suggest that Gaussian data augmentation can significantly improve generalization, particularly the augmentations are applied using patches~\cite{lopes2019improving}. To quote from \cite{rusak2020simple}:
\begin{quote}
    Data augmentation with Gaussian \dots noise serves as a simple yet very strong baseline that is sufficient to surpass almost all previously proposed defenses against common corruptions.
\end{quote}
More complex choices of distributions lead to other robust learning methods involving random removal of patches from the image~(e.g., \texttt{Cutout}~\cite{devries2017improved, zhong2020random}), random replacement of patches~(e.g., \texttt{CutMix}~\cite{yun2019cutmix, takahashi2019data}), or arbitrary generative models, i.e., $\bdelta \sim G \:\#\: \calN(\bzero,\sigma^2 \bI)$ for some measurable~$G: \R^z \to \R^d$~\cite{rusak2020simple}. For generative models with latent dimension~$z \ll d$, the latter approach can be thought of as parameterizing the perturbation distribution~$\lambda$ on a lower-dimensional manifold in the data space, which has been shown to be a strong defense in prior work~\cite{jalal2017robust,xiao2018generating, samangouei2018defense}.

While data augmentation with random noise has been shown to be an effective method for improving robustness in practice, the results in this paper show that even larger gains are possible by optimizing over the perturbation-generating distribution. In particular, Proposition~\ref{T:lambda_star} establishes that the optimal perturbation distribution is not Gaussian and most importantly, not isotropic. Indeed, Figure~\ref{fig:pca-mnist} suggests that the pertubation distribution arising from Algorithm~\ref{L:algorithm} does resemble an anisotropic Gaussian, but only on the basis induced by the principal components of the data.

It is worth noting that, as we mentioned in Section~\ref{S:problem}, the results of this paper do not rely on the linearity of the perturbations. Hence, more complex pertubations can be considered by using an arbitrary, parametrized data transformation~$G: \calX \times \Delta \to \calX$ as in
\begin{prob}\label{P:robust_transformed}
    \minimize_{\btheta \in \Theta}\ \E_{(\bv x, y)\sim\calD} \left[
        	\E_{\bdelta \sim \lambda} \Big[ \ell\big( f_{\btheta}\big( G(\bv x, \bdelta) \big), y \big) \Big]
        \right].
\end{prob}
Due to space constraints, we considered only pertubations of the form~$G(\bv x, \bdelta) = \bv x + \bdelta$ as in~\eqref{P:robust}. Yet, by once again fixing the pertubation distribution~$\lambda$, we can obtain a myriad of data augmentation techniques, including the group-theoretic data-augmentation scheme discussed in~\cite{chen2020group}, where~$G$ denotes the group action, and the model-based robust training methods discussed in~\cite{robey2020model,robey2021model,goodfellow2009measuring,wong2020learning}.  Indeed, exploring the efficacy of DALE toward improving robustness beyond norm-bounded perturbations is an exciting direction for future work.

\subsection{Sampling vs.\ optimizing pertubations}

Aside from fixing the pertubation distribution~$\lambda$, another common approach to adversarial learning is to use a gradient-based local optimization method in order to tackle the maximization in $\ell_\text{adv}$ (see e.g., \cite{goodfellow2014explaining,madry2017towards}). The perturbations found by these gradient-based methods can then be used to train a robust model.  While empirically effective, this approach is not without issues. In particular, gradient-based algorithms are not guaranteed to obtain optimal~(or even near-optimal) perturbations, since~$\ell(f_{\btheta}(\cdot), y)$ is typically not a convex (or concave) function. What is more, maximizing over~$\bdelta$ in the definition of~$\ell_\text{adv}$ is a severely underparametrized problem as opposed to the minimization over~$\btheta$ in~\eqref{P:robust}. It therefore does not enjoy the same benign optimization landscape~\cite{soltanolkotabi2018theoretical, zhang2016understanding, arpit2017closer, ge2017learning, brutzkus2017globally}. Additionally, note that there is no guarantee that this alternating optimization technique converges.

Nevertheless, these algorithms can be seen as limiting cases of Algorithm~\ref{L:algorithm} for specific choices of the losses~$\ell_\text{pert}$, $\ell_\text{ro}$, and $\ell_\text{nom}$, the LMC kinetic energy~(step~6), and the temperature~($T$ in step~5). To illustrate this idea, suppose that both~$\ell_\text{pert}$ and~$\ell_\text{ro}$ are taken to be the cross-entropy loss, i.e., 
\begin{align}
    \ell\big( f_{\btheta}(\bv x), y \big) = -\log \big( [f_{\btheta}(\bv x)]_y \big).
\end{align}
In this case, when we take $T\to0$, Algorithm~\ref{L:algorithm} approaches the gradient-based attacks FGSM (for $L=1$)~\cite{goodfellow2014explaining} and PGD (for $L>1$)~\cite{madry2017towards}.  However, as we observed in Figure~\ref{fig:pca-mnist}, these methods can produce quite different perturbations compared to the perturbations produced by DALE.

Another interesting perspective on gradient-based methods is to consider a different sampling scheme.  Indeed, while we adopted the commonly used Laplacian LMC sampler in Algorithm~\ref{L:algorithm}, an alternative often used to sample from lighter tailed distributions is Gaussian LMC~\cite{neal2011mcmc, bubeck2015finite, nishimura2020discontinuous}. In the latter, rather than defining the kinetic energy of the Hamiltonian as~$K(\bp) \propto \norm{\bp}_1$, this prior is taken to be~$K(\bp) \propto \norm{\bp}_2^2$. This is equivalent to replacing steps~5 and~6 of Algorithm~\ref{L:algorithm} by
\begin{align}\label{E:laplacian_lmc}
    U &\gets  \log \Big[ \ell_\text{pert}(f_{\btheta}(\bv x + \bdelta), y)\Big] \\
    \bdelta &\gets \Pi_{\Delta} \Big[
							\bdelta +  \nabla_{\bdelta} U + \sqrt{2\eta T} \bxi \Big]
\end{align}
where~$\bxi \sim \calN(\bv 0, \bI)$. An interesting direction for future work is to compare the performance of HMC-based samplers under different priors.  For more details regarding the derivation of our LMC sampler, see Appendix~\ref{app:sampler}.

\subsection{Penalty-based methods}

The third approximation of Algorithm~\ref{L:algorithm}, or more precisely, \eqref{P:empirical_dual}, is the use of a fixed~$\nu > 0$, e.g.,~\cite{zhang2019theoretically, wang2019improving, croce2020robustbench}. Indeed, notice that in the definition of $\hat{L}$, $\nu$ is an optimization variable that is dynamically adjusted in Algorithm~\ref{L:algorithm} through the dual ascent update in step~10. Notice that step~10 is simply a (sub)gradient ascent update given that the constraint violation is a subgradient of the dual function~$\hat{d}(\nu) = \min_{\btheta\in\Theta}\ \hat{L}(\btheta,\nu)$ for the empirical Lagrangian~(see Lemma~\ref{T:danskin}).

While effective, there are clear advantages in letting~$\nu$ be an optimization variable. In practice, not only does it lead to improved performance~(see Section~\ref{sect:experiments}), but it has the advantage of precluding the need to manually adjust another hyperparameter, which can be challenging and often requires domain-specific knowledge. Indeed, the value of~$\nu$ depends on the underlying learning task~(model, losses, dataset), making it difficult to transfer across applications and highly dependent on domain knowledge. What is more, if not done carefully, it can hinder generalization guarantees for the solution.

This issue is, in fact, at the core of the theoretical advantage of \eqref{P:empirical_dual}. Indeed, note that classical learning theory~\cite{vapnik2013nature, shalev2014understanding} provides generalization bounds only for the aggregated objective and not each individual penalty term, i.e., for the value of the Lagrangian rather than the adversarial and nominal losses in~\eqref{P:main}. In contrast, Proposition~\ref{T:dual} provides generalization guarantees both in terms of near-optimality and near-feasibility by leveraging the constrained learning theory developed in~\cite{chamon2020probably,chamon2020empirical}.

\newpage
\section{Proof of Proposition 3.1}\label{app:proof-prop-3.1}

Start by writing the primal problem~\eqref{P:semi-infinite} in Lagragian form~\cite[Ch. 4]{bertsekas2009convex}. Explicitly,
\begin{prob}\label{P:primal_lagrangian}
	P_R^\star = \min_{\btheta \in \Theta, t \in L^p}\ \max_{\bar{\lambda} \in L^q_+}\ L_\textup{\ref{P:semi-infinite}}(\btheta, t, \bar{\lambda})
		\text{,}
\end{prob}
where~$L^q_+$ denotes the subspace of almost everywhere non-negative functions of~$L^q$ for~$(1/p) + (1/q) = 1$.  Here the Lagrangian $L_\textup{\ref{P:semi-infinite}}(\btheta, t, \bar{\lambda})$ is defined as
\begin{equation}\label{E:lagrangian_sip}
\begin{aligned}
	L_\textup{\ref{P:semi-infinite}}(\btheta, t, \bar{\lambda}) &= \E_{(\bv x, y) \sim \calD} \!\big[ t(\bv x,y) \big]
		+ \int \bar{\lambda}(\bv x,\bdelta,y) \left[ \ell\big( f_{\btheta}(\xplusd), y \big) - t(\bv x,y) \right]
			d\bv x d\bdelta dy
	\\
	{}&= \int t(\bv x,y) \left[ \fkp(\bv x,y) - \int \bar{\lambda}(\bv x,\bdelta,y) d\bdelta \right] d\bv x dy + \int \bar{\lambda}(\bv x,\bdelta,y) \ell\big( f_{\btheta}(\xplusd), y \big) d\bv x d\bdelta dy
		\text{,}
\end{aligned}
\end{equation}
where we used the density~$\fkp$ of the data distribution~$\calD$. Then, notice that~\eqref{P:primal_lagrangian} can be written iteratively as
\begin{prob}\label{P:primal_lagrangian2}
	P_R^\star = \min_{\btheta \in \Theta}\ p(\btheta) \quad\text{where}\quad p(\btheta) = \min_{t \in L^p}\ \max_{\bar{\lambda} \in L^q_+}\ L_\textup{\ref{P:semi-infinite}}(\btheta, t, \bar{\lambda})
		\text{.}
\end{prob}
Observe that $\btheta$ is constant in the definition of $p(\btheta)$. Since~\eqref{E:lagrangian_sip} is a linear function of~$t$, $p(\btheta)$ is the optimal value of a linear program parametrized by~$\btheta$. Hence, strong duality holds~\cite[Ch. 4]{bertsekas2009convex} and we obtain that
\begin{equation}\label{E:primal_function2}
	p(\btheta) = \max_{\bar{\lambda} \in L^q_+}\ d_\textup{\ref{P:semi-infinite}}(\bar{\lambda})
		\quad\text{where}\quad d_\textup{\ref{P:semi-infinite}}(\bar{\lambda}) = \min_{t \in L^p}\ L_\textup{\ref{P:semi-infinite}}(\btheta, t, \bar{\lambda})
		\text{,}
\end{equation}
for the dual function~$d_\textup{\ref{P:semi-infinite}}$. Since~$t$ is unconstrained and~$L$ is linear in~$t$, the dual function either vanishes for~$\fkp(\bv x,y) = \int \bar{\lambda}(\bv x,\bdelta,y) d\bdelta$ or diverges to~$-\infty$. From~\eqref{E:lagrangian_sip} and~\eqref{E:primal_function2}, we thus obtain that
\begin{equation}\label{E:primal_function3}
\begin{aligned}
	p(\btheta) = \max_{\bar{\lambda} \in L^q_+}& &&\int \bar{\lambda}(\bv x,\bdelta,y) \ell\big( f_{\btheta}(\xplusd), y \big) d\bv x d\bdelta dy
	\\
	\subjectto& &&\int \bar{\lambda}(\bv x,\bdelta,y) d\bdelta = \fkp(\bv x,y)
\end{aligned}
\end{equation}
To conclude, notice that since~$\bar{\lambda}$ is almost everywhere non-negative, it must be that~$\bar{\lambda}(\bv x,\bdelta,y) = 0$ for all~$\bdelta \in \Delta$ whenever~$\fkp(\bv x, y) = 0$. The measure induced by~$\bar{\lambda}$ is therefore absolutely continuous with respect~$\calD$. We can therefore rewrite~\eqref{E:primal_function3} in terms of the Radon-Nykodim derivative,
\begin{align}
    \lambda(\bdelta \mid \bv x, y) = \bar{\lambda}(\bv x,\bdelta,y) / \fkp(\bv x, y)
\end{align}
which yields~\eqref{E:primal_function} as desired.

\newpage
\section{Proof of Proposition 3.2}

For the sake of completeness, before proving Proposition 3.2, in Section~\ref{app:prelims-prop-3.2} we provide a short discussion of the preliminary material needed to prove the proposition.  The majority of this exposition is adapted from Rockafellar and Wets' \emph{Variational Analysis}~\cite{rockafellar2009variational}.  Following this, in Section~\ref{app:prelim-lemma} we present a lemma which establishes the decomposability of $\calP^q$ over $\Omega$, which is crucial in proving the proposition.  Finally, in Section~\ref{app:final-proof-of-3.2} we provide the full proof of Proposition~3.2.

\subsection{Preliminaries}\label{app:prelims-prop-3.2}

Throughout these preliminaries, we let the tuple $(T, \calA)$ denote a measurable space, where $T$ is a nonempty set and $\calA$ is a $\sigma$-algebra of measurable sets belonging to $T$.  Furthermore, we let $\bar\R$ denote the extended real-line, and we will use $\mu$ to refer to an arbitrarily defined measure over the measurable space~$(T,\calA)$.\footnote{In some cases, we will also use $\mu$ to denote the Lebesgue measure on $\R^d$; this distinction will be made clear when we use this convention.}  By $\calG$ we denote an arbitrary space of measurable functions $g:T\to\R^n$.  To this end, given an integrand $f:T\times\R^n\to\bar\R$, we will consider integral functionals of the form
\begin{align}
    I_f[g] = \int_T f(t, g(t)) \mu(dt)
\end{align}
\noindent To begin our preliminaries, we first recall the definition of a normal integrand.

\begin{definition}[Normal integrand]
A function $f:T\times\R^n\to\bar\R$ is called a \textbf{normal integrand} if its epigraphical mapping $S_f:T\to\R^n\times\R$ defined by
\begin{align}
    S_f(t) \triangleq \text{epi} f(t, \cdot) = \left\{ (x,\alpha) \in\R^n\times\R \big| f(t,x)\leq \alpha\right\} \label{eq:epi-def}
\end{align}
is closed valued and measurable.
\end{definition}

\begin{definition}[Carath\'eodory integrand]
A function $f:T\times\R^n\to\R$ is called a \textbf{Carath\'eodory integrand} if it is measurable in $t$ for each $x$ and continuous in $x$ for each $t$.
\end{definition}

\noindent Notably, Carath\'eodory integrands are a special case of normal integrands (see e.g., \cite[Ex.\ 14.29]{rockafellar2009variational}).  Next, recall the definition of a \emph{decomposable space}.  

\begin{definition}[Decomposable space]~\label{def:decomposable}
A space $\calG$ of measurable functions $g:T\to\R^n$ is \textbf{decomposable} in association with a measure $\mu$ on $\calA$ if for every function $g_0\in\calG$, for every set $A\in\calA$ with $\mu(A)<\infty$, and for every bounded, measurable function $g_1:A\to\R^n$, the space $\calG$ contains the function $g:T\to\R^n$ defined by
\begin{align}
    g(t) = \begin{cases}
        g_0(t) &\quad\text{for } t\in T\backslash A, \\
        g_1(t) &\quad\text{for } t\in A.
    \end{cases}
\end{align}
\end{definition}

\noindent Note that the space $\calM(T, A)$ of measurable functions $g:T\to\R^n$ is decomposable, as are the Lebesgue spaces $L^p(T, \calA, \mu)$ for all $p\in[1,\infty]$ (see e.g.,~\cite[Ch. 14]{rockafellar2009variational}).  As we will see, the decomposability of the Lebesgue spaces is integral to the proof of Proposition 3.2.  However, before proceeding to the proof, we first restate a crucial result concerning the interchangability of minimization and integration, which relies on this notion of decomposability defined above.

\begin{theorem}[Thm.\ 14.60 in~\cite{rockafellar2009variational}] \label{thm:interchange}
Let $\calG$ be a space of measurable functions from $T$ to $R^n$ that is decomposable relative to a $\sigma$-finite measure $\mu$ defined on $\calA$.  Let $f:T\times\R^n$ be a normal integrand.  Then the minimization of $I_f$ over $\calG$ can be reduced to a pointwise minimization in the sense that, as long as $I_f \not\equiv 0$ on $\calG$, one has
\begin{align}
    \inf_{g\in\calG} \int_T f(t, g(t))\mu(dt) = \int_T \left[ \inf_{x\in\R^n} f(t,x) \right]\mu(dt)
\end{align}
Moreover, as long as this common value is not $-\infty$, one has for $\bar g\in\calG$ that
\begin{align}
    \bar g\in\argmin_{g\in\calG} I_f[g] \iff \bar g(t) \in\argmin_{x\in\R^n} f(t,x) \quad\text{for } \mu\text{-almost every } t\in T.
\end{align}
\end{theorem}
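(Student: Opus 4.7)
The plan is to establish the equality of infima first and then deduce the $\argmin$ characterization from it. The direction $\inf_{g\in\calG} I_f[g] \geq \int_T[\inf_x f(t,x)]\mu(dt)$ is trivial: for every $g\in\calG$ and every $t\in T$ we have $f(t,g(t)) \geq \inf_{x\in\R^n} f(t,x)$, and integrating preserves this inequality, so the integral functional is pointwise bounded below by $\int_T m(t)\mu(dt)$, where $m(t) := \inf_y f(t,y)$.

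The reverse inequality is the substantive part. Fix $\epsilon > 0$ and introduce the multifunction $t \mapsto L_\epsilon(t) := \{x\in\R^n : f(t,x) \leq m(t) + \epsilon\}$, with an appropriate modification (e.g., threshold $-1/\epsilon$) on the measurable set where $m(t) = -\infty$. Because $f$ is a normal integrand, $m$ is measurable and $L_\epsilon$ is closed-valued with measurable graph, so the Kuratowski--Ryll-Nardzewski selection theorem produces a measurable $\tilde g_\epsilon : T\to\R^n$ with $\tilde g_\epsilon(t)\in L_\epsilon(t)$ for $\mu$-almost every $t$. The obstacle is that $\tilde g_\epsilon$ need not lie in $\calG$, and this is precisely where decomposability (Definition~\ref{def:decomposable}) is deployed. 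Fixing a reference $g_0\in\calG$ with $I_f[g_0]$ finite (guaranteed by the non-triviality hypothesis on $I_f$) and using $\sigma$-finiteness to write $T = \bigcup_k A_k$ with $A_k \uparrow T$ and $\mu(A_k) < \infty$, I would set $B_k := \{t\in A_k : \|\tilde g_\epsilon(t)\| \leq k\}$ and paste $\tilde g_\epsilon$ on $B_k$ with $g_0$ on $T\setminus B_k$; the resulting function $g_{\epsilon,k}$ lies in $\calG$ by Definition~\ref{def:decomposable}. A diagonal argument in $(\epsilon,k)$ together with monotone/dominated convergence then drives $I_f[g_{\epsilon,k}]$ down to $\int_T m(t)\mu(dt)$, giving the reverse inequality.

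For the $\argmin$ claim, suppose the common value is finite and let $\bar g\in\argmin_{g\in\calG} I_f[g]$. Then the nonnegative measurable function $t \mapsto f(t,\bar g(t)) - m(t)$ has integral $I_f[\bar g] - \int_T m(t)\mu(dt) = 0$, so it vanishes $\mu$-a.e., which is exactly the statement $\bar g(t)\in\argmin_x f(t,x)$ a.e. Conversely, any $\bar g$ satisfying the pointwise $\argmin$ condition a.e.\ achieves $I_f[\bar g] = \int_T m(t)\mu(dt)$, which by the first stage equals $\inf_{g\in\calG} I_f[g]$.

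The main obstacle I expect is the combined measurable-selection-and-decomposability step, which has two technical subtleties. First, $L_\epsilon$ must be defined carefully wherever $m(t) = -\infty$ so that it remains nonempty and closed-valued with measurable graph; the non-triviality assumption on $I_f$ is what rules out the degenerate case in which no selector in $\calG$ can track the pointwise lower bound. Second, the passage-to-the-limit requires integrable domination, which is obtained by sandwiching $f(t, g_{\epsilon,k}(t))$ between $m(t)$ on $B_k$ and $f(t, g_0(t))$ on $T\setminus B_k$; hence the reference $g_0$ must be chosen with $f(\cdot, g_0(\cdot))$ integrable, which is feasible precisely because $I_f\not\equiv +\infty$ on $\calG$.
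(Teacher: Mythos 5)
This statement is quoted verbatim from Rockafellar and Wets (Thm.~14.60) and the paper offers no proof of it, so there is nothing internal to compare against; your argument should be judged against the standard textbook proof, and it is essentially that proof: the trivial pointwise inequality for one direction, and for the other a measurable selection from the $\epsilon$-level sets of the normal integrand, spliced into $\calG$ via decomposability after truncating to sets of finite measure, followed by the zero-integral-of-a-nonnegative-function argument for the $\argmin$ equivalence. Two refinements are worth noting. First, on a $\sigma$-finite space of infinite total measure the constant tolerance $\epsilon$ does not work, since $\int_T\bigl(m(t)+\epsilon\bigr)\mu(dt)=+\infty$; one must instead select from $\{x: f(t,x)\leq \beta(t)\}$ for a measurable $\beta>m$ with $\int_T\beta\,d\mu$ close to $\int_T m\,d\mu$ (e.g.\ $\beta=m+\epsilon h$ with $h>0$ integrable), which is the form of the argument in the reference; in the paper's applications all measures are finite, so your version suffices there. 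Second, $I_f[g_0]<\infty$ does not by itself make $f(\cdot,g_0(\cdot))$ integrable (its negative part may have infinite integral), but in that case both sides of the identity are $-\infty$ by the easy inequality, so this degenerate case can be dispatched separately before running your splicing argument. Neither point changes the structure of the proof.
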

\noindent The utility of this result is that under the assumptions that function class $\calG$ is decomposable, the integrand $f$ is normal, and $I_f$ is finite over $\calG$, it holds that the minimization and integration operations can be exchanged.  Furthermore, note that this result is more general than we need; indeed, all of the integrands we work with in the next subsection are Carath\'eodory and hence normal.

\subsection{A preliminary lemma}\label{app:prelim-lemma}

The first step toward proving Proposition~\ref{T:lambda_star} is to show that the space $\calP_q$ is decomposable over the data space $\Omega$.  We state this result in the following lemma, as it may be of expository interest as a warm-up before the proof of Proposition 3.2.
\begin{lemma}[Decomposability of $\calP^q$ over $\Omega$]  \label{lem:decomp}
The space $\calP^q$ of distributions of $\Delta$ defined in Proposition~\ref{T:lambda_star} is decomposable over $\Omega = \calX\times\calY$ in the sense of definition~\ref{def:decomposable}.
\end{lemma}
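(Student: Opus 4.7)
The plan is to verify Definition~\ref{def:decomposable} directly, viewing $\calP^q$ as a constrained subset of $L^q(\Omega \times \Delta)$ whose elements $\lambda$ we think of as maps from $\Omega$ into the set of non-negative densities on $\Delta$. The key observation is that each constraint defining $\calP^q$ --- namely, (i)~non-negativity a.e., (ii)~vanishing wherever $\fkp = 0$, (iii)~integration to unity in $\bdelta$ for almost every $(\bv x, y)$, and (iv)~$L^q$-integrability --- is either a \emph{pointwise} condition on $(\bv x, y) \in \Omega$ or one that is preserved when a function is replaced on a set of finite measure by a bounded function.

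Concretely, I would fix $\lambda_0 \in \calP^q$, a measurable set $A \subseteq \Omega$ with finite measure, and a bounded, measurable $\lambda_1$ on $A \times \Delta$ that itself satisfies constraints (i)--(iii) pointwise on $A$. I would then form the spliced candidate
\begin{equation*}
    \lambda(\bdelta \mid \bv x, y) = \bbI_{\Omega \setminus A}(\bv x, y)\, \lambda_0(\bdelta \mid \bv x, y) + \bbI_A(\bv x, y)\, \lambda_1(\bdelta \mid \bv x, y),
\end{equation*}
and verify that $\lambda \in \calP^q$ by checking (i)--(iv) in turn.

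The verifications are essentially mechanical: joint measurability on $\Omega \times \Delta$ follows from the measurability of $A$ and of each piece; non-negativity and the support condition (ii) are pointwise in $(\bv x, y)$ and so are inherited from $\lambda_0$ and $\lambda_1$ on their respective domains; integrating over the two disjoint cases yields $\int_\Delta \lambda(\bdelta \mid \bv x, y)\, d\bdelta = 1$ for a.e.\ $(\bv x, y)$; and for (iv), $\lambda_0 \bbI_{\Omega \setminus A} \in L^q$ since $\lambda_0 \in L^q$, while the boundedness of $\lambda_1$ together with the finiteness of $\mu(A)$ and of the Lebesgue measure of $\Delta$ yields $\lambda_1 \bbI_A \in L^q$; summing gives $\lambda \in \calP^q$.

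The main thing to be pinned down --- and the only genuine obstacle --- is the interpretation of ``bounded measurable $g_1$'' in Definition~\ref{def:decomposable} when applied to a constrained space like $\calP^q$. The natural reading, and the one that makes the lemma useful for the subsequent application of Theorem~\ref{thm:interchange} in the proof of Proposition~\ref{T:lambda_star}, is that $g_1$ must take values in the constraint set of $\calG$ rather than in an ambient linear space. Once this is fixed, the lemma reduces to the elementary pointwise checks sketched above.
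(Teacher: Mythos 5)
Your splicing argument is the same as the paper's: fix an element of $\calP^q$, fix a finite-measure $A \subseteq \Omega$, form the indicator-spliced function, and verify the pointwise constraints (non-negativity, absolute continuity with respect to $\fkp$, normalization over $\Delta$) case by case; you also add an explicit $L^q$-integrability check that the paper leaves implicit. Your concern about the interpretation of ``bounded measurable $g_1$'' in Definition~\ref{def:decomposable} is well taken and worth flagging: the paper's proof silently takes $\lambda'$ from $\calP^q$ rather than letting it be an arbitrary bounded measurable function, and under the \emph{literal} definition $\calP^q$ is in fact not decomposable (splicing in $\lambda_1 \equiv 0$ on $A$ destroys normalization). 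The working interpretation --- that the replacement must respect the pointwise constraint set, i.e.\ be a density on $\Delta$ vanishing where $\fkp$ does --- is the one both you and the paper actually use, and it is what is needed for the application of Theorem~\ref{thm:interchange} in the proof of Proposition~\ref{T:lambda_star}, where the pointwise infimum is over that same constraint set rather than all of $L^2(\Delta)$.
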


\begin{proof}   
Let $\mu$ denote the Lebesgue measure on $\R^d$. Recall that $\calP^q$ is the subset of $L^p$ containing functions $\lambda$ with the following properties:
\begin{enumerate}[leftmargin=2cm]
    \item[(P1)] $\lambda(\cdot|\bv x,y)$ is almost everywhere non-negative on $\Omega$,
    \item[(P2)] $\lambda(\cdot | \bv x,y)$ is absolutely continuous with respect to $\fkp(\bv x,y)$,
    \item[(P3)] $\int_\Delta \lambda(\bdelta | \bv x,y) \mu(d\bdelta)=1 \quad\text{for }\fkp\text{-almost every } (\bv x, y)\in\Omega$.
\end{enumerate}
To show that $\calP^q$ is decomposable over $\Omega$, first let $\lambda,\lambda'\in\calP^q$ and $A\subseteq\Omega$ with $\mu(A)<\infty$ be arbitrarily chosen.  Define the functional
\begin{align}
    \bar \lambda(\bdelta | \bv x, y) = \begin{cases}
        \lambda(\bdelta | \bv x, y) &\quad\text{for } (\bv x, y)\in \Omega\backslash A, \\
        \lambda'(\bdelta | \bv x, y) &\quad\text{for } (\bv x, y) \in A.
    \end{cases}
\end{align}
Our goal is to show that $\bar\lambda$ is an element of $\calP^q$.  To begin, observe that by (P1), $\lambda$ and $\lambda'$ are almost everywhere non-negative, and therefore so is $\bar\lambda$.  Further, by (P2), both $\lambda$ and $\lambda'$ are absolutely continuous with respect to $\fkp$.  Thus, observe that if $B\in\calB$ such that $\fkp(B) = 0$, then $\lambda(\delta|
B) = \lambda'(\delta|B) = 0$, and thus it holds that $\bar\lambda(\delta|B) = 0$, proving that $\bar\lambda\ll \fkp$.  Finally, note that by (P3), both $\lambda$ and $\lambda'$ are normalized along $\Delta$.  Thus, for any fixed $(\bv x,y)\in\Sigma$, it holds that $\int_\Delta\bar\lambda(\bdelta|\bv x,y)\mu(d\delta) = 1$.  Thus, it holds that $\bar\lambda\in\calP^q$, as was to be shown.
\end{proof}

\subsection{Proof of Proposition 3.2}\label{app:final-proof-of-3.2}

Ultimately, there are three main steps to this proof.  (1) First, we argue that Thm.~\ref{thm:interchange} applies to the maximization over $\lambda$, so that the expectation over the data distribution $\calD$ and the maximization over $\lambda\in\calP^q$ can be interchanged.  (2) We argue that strong duality holds for the inner problem induced by pushing the maximization inside the expectation.  (3) We find a closed-form solution for the dual problem, proving the claim of the proposition.

\begin{proof}
\textbf{Step 1.} To begin, we argue that the maximization over $\lambda$ and the expectation over the data distribution $\calD$ can be interchanged.  To do so, we define the function $F:\Omega\times\calP^2\to\R$
\begin{align}
    F\big((\bv x,y), \lambda\big) \triangleq \E_{\bdelta\sim\lambda(\bdelta|\bv x,y)} \big[ \ell(f_{\btheta}(\bv x+\bdelta),y)\big]
\end{align}
so that the optimization problem in~\eqref{P:primal_sip} can be written as
\begin{align}
    P_R^\star = \min_{\btheta\in\Theta} \: p(\btheta) \quad\text{where}\quad p(\btheta)\triangleq \max_{\lambda\in\calP^2} \: \E_{(\bv x,y)\sim\calD} \big[ F\big((\bv x,y),\lambda\big)\big].
\end{align}
Now observe that by construction $F$ is measurable in $(\bv x,y)$ for each $\lambda$, and continuous (in fact, linear) in $\lambda$ for each~$(\bv x,y)$.  Thus, $F$ is a Carath\'eodory integrand, and as $\calP^2$ is decomposable over $\Omega$ by Lemma~\ref{lem:decomp},  Thm.~\ref{thm:interchange} applies to $p(\btheta)$.  Therefore, the maximization in the definition of $p(\btheta)$ can be pushed inside the expectation over $(\bv x,y)\sim\calD$, yielding the following equality:
\begin{align}
    p(\btheta) = \max_{\lambda\in\calP^2} \: \E_{(\bv x,y)\sim\calD} \big[ F\big((\bv x,y),\lambda\big)\big] = \E_{(\bv x,y)\sim\calD} \left[ \max_{\lambda\in\calP^2} F((\bv x,y),\lambda)\right]. \label{eq:interchange}
\end{align}

\textbf{Step 2.} Next, we argue that the inner maximization problem in the expression on the RHS of~\eqref{eq:interchange} is strongly dual.  To begin, notice that this inner maximization is now performed separately for each data point $(\bv x,y)\in\Omega$.  We therefore proceed by considering the solution of the inner problem for an arbitrary but fixed data point~$(\bv{\bar x},\bar y)$.  To this end, first let $\lambda^\star$ be the solution to the inner problems for this fixed pair $(\bv{\bar x},\bar y)$, i.e. $\lambda^\star$ achieves the optimal value in
\begin{align}
    u(\btheta) = u(\btheta, \bv{\bar x}, \bar{y}) = \max_{\lambda\in\calP^2} F((\bv{\bar x},\bar y),\lambda) \label{eq:inner-prob}
\end{align}
Now let $\fkm(\Delta)$ denote the Lebesgue measure of $\Delta$, and consider that H\"older's inequality implies that $\norm{\lambda^\star}_{L^1} \leq \fkm(\Delta)^{1/2} \norm{\lambda^\star}_{L^2}$.  Further, as each feasible $\lambda\in\calP^2$ is normalized over $\Delta$, it holds that
\begin{align}
    \frac{1}{\fkm(\Delta)} \leq \norm{\lambda^\star}_{L^2}^2. \label{eq:holder-normalization}
\end{align}
Thus, since $\calP^2\subset L_+^2$, it holds that $\lambda^\star\in L_+^2$, and thus there exists a constant $c$ satisfying $1/\fkm(\Delta)\leq c<\infty$ such that
\begin{align}
    \norm{\lambda^\star}_{L^2}^2 = \int_\Delta F((\bv{\bar x},\bar y), \delta)^2 d\delta \leq c.
\end{align}
Accordingly, we can rewrite~\eqref{eq:inner-prob} in an equivalent way as follows:
\begin{prob} \label{P:primal_function_mod}
    u(\btheta) = &\max_{\lambda\in L_+^2} &&\E_{\bdelta\sim\lambda(\bdelta|\bv x,y)} \big[ \ell(f_{\btheta}(\bv {\bar x}+\bdelta),\bar y)\big] \\
    &\subjto &&\int_\Delta \lambda(\bdelta|\bv {\bar x},\bar y) d\delta = 1, \quad \int_\Delta \lambda(\bdelta|\bv {\bar x},\bar y)^2 d\delta \leq c.
\end{prob}
Notice that~\eqref{P:primal_function_mod} is a convex quadratic program in the optimization variable $\lambda$.  Furthermore, note that if $c = 1/\fkm(\Delta)$ (i.e., equality is achieved in the expression~\eqref{eq:holder-normalization} derived from H\"older's inequality), then the feasible set is a singleton which is equivalent in $L_2$ to $\lambda(\bdelta | \bv {\bar x},\bar y) = 1/\fkm(\Delta)$.  Alternatively, if $c > 1/\fkm(\Delta)$, then $\lambda(\bdelta|\bv {\bar x},\bar y)$ is a strictly feasible point, and thus Slater's condition holds.  In either case, we find that~\eqref{P:primal_function_mod} is strongly dual~\cite[Ch. 4]{bertsekas2009convex} and so we can write
\begin{equation}\label{E:primal_function_dual}
	u(\btheta) = \min_{\gamma \geq 0 \text{, } \mu \in \setR}\ d_\textup{\ref{P:primal_function_mod}}(\btheta, \gamma, \mu)
    	\text{,}
\end{equation}
for the dual function
\begin{equation*}
	d_\textup{\ref{P:primal_function_mod}}(\btheta, \gamma, \mu) = \max_{\lambda \in L^2_+}\ \int_\Delta \big[
		 \ell(f_{\btheta}(\bv {\bar x} + \bdelta)\lambda(\bdelta|\bv {\bar x},\bar y)
		- \gamma \lambda(\bdelta | \bv{\bar x}, \bar y)^2
		- \mu \lambda(\bdelta| \bv{\bar x}, \bar y)
	\big] d\bdelta + \gamma c + \mu
		\text{.}
\end{equation*}

\textbf{Step 3.}  Finally, we find a closed-form expression for the solution to the dual problem derived above.  To do so, an entirely similar argument to the one given in Lemma~\ref{lem:decomp} shows that $L_+^2$ is decomposable.  And indeed, as the integrand in the above primal function is clearly Carath\'eodory, we can again apply Theorem~\ref{thm:interchange} to $d_\textup{\ref{P:primal_function_mod}}(\btheta, \gamma, \mu)$:
\begin{align}
    d_\textup{\ref{P:primal_function_mod}}(\btheta, \gamma, \mu) = \int_\Delta  \left\{ \max_{\lambda\in L_+^2} \: \ell(f_{\btheta}(\bv {\bar x} + \bdelta)\lambda(\bdelta|\bv {\bar x},\bar y)
		- \gamma \lambda(\bdelta | \bv{\bar x}, \bar y)^2
		- \mu \lambda(\bdelta| \bv{\bar x}, \bar y) \right\} d\delta + \gamma c + \mu.
\end{align}
A straightforward calculation of the inner maximization problem shown above yields
\begin{equation}\label{E:optimal_lambda}
	\lambda^\star(\bdelta | \bv {\bar x}, \bar y) = \left[ \frac{\ell(f_{\btheta}(\bv{\bar x} + \bdelta), \bar y) - \mu}{2\gamma} \right]_+
		\text{,}
\end{equation}
where~$[z]_+ = \max(0,z)$ denotes the projection onto the non-negative orthant.  From~\eqref{P:primal_function_mod}, $\mu$ is chosen so as to meet the normalization constraint, i.e., so that
\begin{equation*}
	\int_\Delta \left[ \frac{\ell(f_{\btheta}(\bv{\bar x} + \bdelta), \bar y) - \mu}{2\gamma} \right]_+ d\bdelta = 1 \iff \int_\Delta \left[ \ell(f_{\btheta}(\mathbf{\bar x}+\bm\delta), \bar y) - \mu\right]_+ d\delta = 2\gamma
		\text{.}
\end{equation*}
To conclude, notice that due to the strong duality of~\eqref{P:primal_function_mod}, for each value of~$c < \infty$ there is a value of~$\gamma > 0$ such that~\eqref{E:optimal_lambda} is a solution of~\eqref{P:primal_function_mod}~\cite[Ch. 4]{bertsekas2009convex}. Also, since~$(\bv {\bar x}, \bar y)$ were chosen arbitrarily, \eqref{E:optimal_lambda} holds for all data point. Given that the space is decomposable, these solutions can be pieced together, yielding the desired result.
\end{proof}
\newpage
\section{Proof of Proposition~\ref{T:dual}}

We proceed here as in~\cite{chamon2020probably}. However, we deviate slightly from the proof of the parametrization gap~\cite[Prop.~2 in Appendix~B.1]{chamon2020probably} to account for the maximization in the robust loss.  In particular, the proof of Proposition 3.6 is organized in the following way:
\begin{enumerate}
    \item First, in Section~\ref{app:param-gap} we bound the deviation between the primal problem~\eqref{P:main} and dual problem~\eqref{P:dual_main}.  The result is summarized in Lemma~\ref{T:param_gap}.
    \item Next, in Section~\ref{app:prelims-3.6}, we review two results needed to complete the proof of Proposition~3.6 concerning the continuity and differentiability of the dual objective.
    \item Finally, in Section~\ref{S:empirical_gap} we leverage the preliminaries provided in Section~\ref{app:prelims-3.6} to complete the proof of the proposition.  This result is summarized in Proposition~\ref{T:empirical}.
\end{enumerate}

\noindent Ultimately, the result in Proposition~\ref{T:dual} is obtained by combining the results in Lemma~\ref{T:param_gap} and Proposition~\ref{T:empirical} and using the union bound.

\subsection{Bounding the parametrization gap} \label{app:param-gap}

In this section, we are interested in the relationship between the statistical problem~\eqref{P:main} and its dual problem. In particular, the dual problem to~\eqref{P:main} can be written in the following way
\begin{prob}[\textup{DI}]\label{P:dual_main}
	D^\star \triangleq \max_{\nu \geq 0}\ \min_{\btheta \in \Theta}\ L(\btheta,\nu)
\end{prob}
for the Lagrangian
\begin{equation}\label{E:lagrangian_main}
	L(\btheta,\nu) = \E \Big[ \max_{\dinD}\ \ell\big( f_{\btheta}(\xplusd), y \big) \Big]
		+ \nu \Big[ \E \left[ \ell\big(f_{\btheta}(\bv x), y\big) \right] - \epsilon \Big]
			\text{.}
\end{equation}
The goal of this subsection is to prove the following lemma, which establishes bounds on the error induced by the parameterization space $\Theta$.

\begin{lemma}\label{T:param_gap}
Under the conditions of Prop.~\ref{T:dual}, the value~$D^\star$ of~\eqref{P:dual_main} is related to the value~$P^\star$ of~\eqref{P:main} by
\begin{equation}\label{E:param_gap}
	P^\star - M \alpha \leq D^\star \leq P^\star
		\text{.}
\end{equation}
\end{lemma}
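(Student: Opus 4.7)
The plan is to bracket $D^\star$ between $P^\star - M\alpha$ and $P^\star$, with the upper bound coming from standard weak duality and the lower bound requiring a detour through a convex relaxation. First I would establish the upper bound $D^\star \leq P^\star$. For any $\btheta$ feasible for~\eqref{P:main} and any $\nu \geq 0$, the Lagrangian $L(\btheta,\nu)$ is bounded above by the adversarial risk because the constraint slack $\E[\ell(f_{\btheta}(\bv x),y)] - \rho \leq 0$ is multiplied by $\nu \geq 0$. Hence $\min_{\btheta \in \Theta} L(\btheta,\nu) \leq \min_{\btheta\text{ feasible}} \E[\max_{\bdelta}\ell(f_{\btheta}(\bv x+\bdelta),y)] = P^\star$ for every $\nu \geq 0$, and taking the supremum over $\nu$ yields $D^\star \leq P^\star$.

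For the lower bound, I would introduce the convex relaxation of~\eqref{P:main} obtained by replacing the parameterized class $\calH = \{f_{\btheta} : \btheta \in \Theta\}$ by its convex hull $\bar\calH = \conv(\calH)$, and let $\bar P^\star$ and $\bar D^\star$ denote its primal and dual values. Under the hypothesis that $\ell(\cdot,y)$ is convex, the adversarial functional $f \mapsto \E[\max_{\bdelta}\ell(f(\bv x+\bdelta),y)]$ and the nominal functional $f \mapsto \E[\ell(f(\bv x),y)]$ are both convex on $\bar\calH$, since convexity is preserved under pointwise maxima over $\bdelta$ and under expectation. Assumption~\ref{A:slater} provides a strictly feasible point in $\Theta \subseteq \bar\calH$ with margin $M\alpha$, so Slater's condition holds for the convex program and strong duality gives $\bar P^\star = \bar D^\star$. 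Moreover, since $\bar\calH \supseteq \calH$ the dual function $\bar d(\nu) = \min_{f \in \bar\calH} L(f,\nu)$ satisfies $\bar d(\nu) \leq d(\nu) = \min_{\btheta \in \Theta} L(f_{\btheta},\nu)$ pointwise in $\nu \geq 0$, hence $\bar D^\star \leq D^\star$.

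It remains to show the parameterization gap $P^\star \leq \bar P^\star + M\alpha$. Let $\bar f^\star \in \bar\calH$ be a minimizer of the convex relaxation; since the relaxation has a single inequality constraint, a Carathéodory-type argument lets us represent $\bar f^\star$ as a convex combination of at most two elements of $\calH$, say $\bar f^\star = \beta f_{\btheta_1} + (1-\beta) f_{\btheta_2}$. Applying Assumption~\ref{A:parametrization} to this pair produces a $\btheta \in \Theta$ with $\sup_{\bv x \in \calX} |f_{\btheta}(\bv x) - \bar f^\star(\bv x)| \leq \alpha$, and the $M$-Lipschitzness of $\ell(\cdot,y)$ then propagates this to a uniform $M\alpha$ bound on both the nominal loss and the adversarial loss (using $|\max_{\bdelta} g_1(\bdelta) - \max_{\bdelta} g_2(\bdelta)| \leq \max_{\bdelta} |g_1(\bdelta) - g_2(\bdelta)|$ to transfer the bound through the inner maximum). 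The $M\alpha$ slack in Assumption~\ref{A:slater}, possibly combined with a small mixing toward the strictly feasible $\btheta^\prime$ of A.2, restores feasibility of $f_{\btheta}$ for~\eqref{P:main} while its adversarial risk exceeds $\bar P^\star$ by at most $M\alpha$. Chaining all steps yields $P^\star - M\alpha \leq \bar P^\star = \bar D^\star \leq D^\star \leq P^\star$.

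The main obstacle is the parameterization step: extending the pairwise approximation in A.1 to a general element of $\bar\calH$ (handled by observing the optimum of a one-constraint convex program is attained as a two-element convex combination), and verifying that the Lipschitz $M\alpha$ slack passes simultaneously through the inner $\max_{\bdelta}$ in the adversarial term and the nominal constraint without breaking feasibility — precisely what the margin in A.2 is designed to buy.
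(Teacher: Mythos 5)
The upper bound $D^\star \leq P^\star$ via weak duality matches the paper, and your decision to work in $\bar\calH=\conv(\calH)$ and use Slater's condition plus convexity of the adversarial and nominal functionals to obtain strong duality for the relaxed problem is also the right skeleton. The genuine gap is in the step $P^\star \leq \bar P^\star + M\alpha$, specifically in the feasibility argument. You set up the convex relaxation with the \emph{same} constraint level $\rho$ as \eqref{P:main}, take its minimizer $\bar f^\star$ with $\E[\ell(\bar f^\star(\bv x),y)]\leq\rho$, and approximate it by $f_{\btheta}$ within $\alpha$ in sup norm. The $M$-Lipschitz property then only gives $\E[\ell(f_{\btheta}(\bv x),y)]\leq\rho+M\alpha$, so $f_{\btheta}$ may be \emph{infeasible} for \eqref{P:main}, and $P^\star \leq \text{obj}(f_{\btheta})$ does not follow. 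You flag this and propose to "restore feasibility" by mixing $f_{\btheta}$ with the Slater point $f_{\btheta'}$ from Assumption~\ref{A:slater}. This does not work: the constraint slack of $f_{\btheta}$ can be a full $M\alpha$ while the slack at $f_{\btheta'}$ is only guaranteed to beat $\rho - M\alpha$, so restoring feasibility of the mixture by convexity can require a mixing weight as small as $\beta\approx 1/2$. At that point the objective of the mixture is controlled only by $\frac12\text{obj}(f_{\btheta})+\frac12\text{obj}(f_{\btheta'})$, and the latter term can be as large as $B$; nothing bounds this by $\bar P^\star+M\alpha$. Moreover the mixture lies in $\bar\calH$ but not in $\calH$, so you would have to re-approximate by Assumption~\ref{A:parametrization}, reintroducing the same $M\alpha$ constraint violation you just tried to remove.

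The paper's proof avoids this circularity by tightening the constraint in the relaxed problem \emph{before} approximating: it defines \eqref{P:variational_pert} over $\bar\calH$ with constraint level $\rho - M\alpha$ (this is exactly what the margin in Assumption~\ref{A:slater} is for — it ensures Slater's condition still holds for the tightened problem). Its minimizer $\tilde\phi^\star$ then satisfies $\E[\ell(\tilde\phi^\star(\bv x),y)]\leq\rho-M\alpha$, so the $\alpha$-approximant $f_{\tilde\btheta^\dagger}\in\calH$ is automatically feasible for \eqref{P:main}, and the objective transfer through the inner $\max_{\bdelta}$ (which you handled correctly with the observation $|\max_{\bdelta}g_1-\max_{\bdelta}g_2|\leq\max_{\bdelta}|g_1-g_2|$) then yields the $M\alpha$ bound cleanly. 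Your intuition that A.2's margin is "designed to buy" the feasibility is correct, but the mechanism is tightening the relaxed constraint \emph{ab initio}, not post hoc mixing. If you replace your $\rho$-level relaxation with the $(\rho-M\alpha)$-level relaxation and drop the mixing step, the rest of your argument survives.
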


\begin{proof}

The result in Lemma~\ref{T:param_gap} is trivial when the hypothesis class~$\calH$ induced by the parametrization is convex. In this case, \eqref{P:main} is a convex program and Assumption~\ref{A:slater}~(Slater's condition) implies that it is strongly dual~\cite[Ch. 4]{bertsekas2009convex}. In other words, $P^\star = D^\star$. Hence, we are interested in the setting in which~$\calH$ is not convex, but is still a rich parametrization as per~\eqref{A:parametrization} such as the class of CNNs.

In the nonconvex case, the upper bound in~\eqref{E:param_gap} is a simple consequence of weak duality~\cite[Ch. 4]{bertsekas2009convex}. To obtain the lower bound, consider the variational problem
\begin{prob}\label{P:variational_pert}
	\tilde{P}^\star \triangleq &\minimize_{\phi \in \bar{\calH}} &&\E_{(\bv x, y) \sim \calD} \left[
	    	\max_{\dinD}\ \ell\big( \phi(\xplusd), y \big)
	    \right]
    \\
	&\subjto &&\E_{(\bv x,y) \sim \calD} \left[ \ell\big(\phi(\bv x), y\big) \right] \leq \epsilon - M \alpha
\end{prob}
for~$\bar{\calH} = \conv(\calH)$. Let~$\tilde{\phi}^\star \in \bar{\calH}$ be a solution of~\eqref{P:variational_pert} associated with~$\delta^\star(\bv x, y)$, the perturbations that attains the maximum in its objective. Since~$\Delta$ is a compact set by assumption, there indeed exists a perturbation~$\delta^\star \in \Delta$ that achieves the maximum. Since~$\bar{\calH}$ is convex, \eqref{P:variational_pert} is now a convex optimization problem~(recall that the pointwise maximum of convex functions is convex~\cite[Prop 1.1.6]{bertsekas2009convex}) which therefore has a strictly feasible point~$f_{\btheta^\prime} \in \calH \subset \bar{\calH}$~(Assumption~\ref{A:slater}). Hence, it is strongly dual~\cite[Ch.~4]{bertsekas2009convex} and
\begin{equation}\label{E:dual_perturbed}
	\tilde{P}^\star = \max_{\tilde{\nu} \geq 0}\ \min_{\phi \in \bar{\calH}}\ \tilde{L}(\phi,\tilde{\nu})
		= \tilde{L}(\tilde{\phi}^\star,\tilde{\nu}^\star)
		\text{,}
\end{equation}
where~$\tilde{\nu}^\star$ achieves the maximum in~\eqref{E:dual_perturbed} for the Lagrangian\footnote{For clarity, we omit the distribution~$\calD$ over which the expectations are taken.}
\begin{equation}\label{E:lagrangian_perturbed}
	\tilde{L}(\phi,\tilde{\nu}) = \E \Big[ \max_{\dinD}\ \ell\big( \phi(\xplusd), y \big) \Big]
		+ \tilde\nu \Big[ \E \left[ \ell\big(\phi(\bv x), y\big) \right] - \epsilon + M \alpha \Big]
\end{equation}
To proceed, notice from~\eqref{P:dual_main} that
\begin{equation*}
	D^\star \geq \min_{\btheta \in \Theta}\ L( \btheta, \nu )
		\text{,} \quad \text{for all } \nu \geq 0
		\text{,}
\end{equation*}
and that since~$\calH \subseteq \bar{\calH} = \conv(\calH)$, we obtain
\begin{equation}\label{E:lower_bound1}
	D^\star	\geq \min_{\btheta \in \Theta} L( \btheta, \tilde{\nu}^\star)
		\geq \min_{\phi \in \bar{\calH}} \tilde{L}( \phi, \tilde{\nu}^\star )
		\text{.}
\end{equation}
Using the strong duality of~\eqref{P:variational_pert}, the expression written above in~\eqref{E:lower_bound1} yields
\begin{equation}\label{E:lower_bound2}
	D^\star \geq \min_{\phi \in \bar{\calH}} \tilde{L}( \phi, \tilde{\nu}^\star ) = \tilde{P}^\star
		= \E \!\Big[ \ell\big( \phi^\star(\bv x + \delta^\star(\bv x, y)), y \big) \Big]
		\text{.}
\end{equation}
Now note that to obtain the lower bound in~\eqref{E:param_gap}, it suffices to show that
\begin{align}
    \E \!\Big[ \ell\big( \phi^\star(\bv x + \delta^\star(\bv x, y)), y \big) \Big] \geq P^\star - M\alpha
\end{align}
To obtain this lower bound, notice from Assumption~\ref{A:parametrization} that there exists~$\tilde{\btheta}^\dagger \in \Theta$ such that
\begin{align}
    \sup_{\bv x} \abs{\tilde{\phi}^\star(\bv x) - f_{\tilde{\btheta}^\dagger}(\bv x)} \leq \alpha.
\end{align}
For these parameters, it holds that
\begin{multline*}
	\abs{\E \Big[ \ell\big( \phi(\bv x + \delta^\star(\bv x, y)), y \big) \Big]
		- \E \Big[ \max_{\dinD}\ \ell\big( f_{\tilde{\btheta}^\star}(\xplusd), y \big) \Big]}
	\leq
	\\
	\E \left[ \abs{\ell\big( \phi(\bv x + \delta^\star(\bv x, y)), y \big)
		- \max_{\dinD}\ \ell\big( f_{\tilde{\btheta}^\star}(\xplusd), y \big)} \right]
	\leq
	\\
	\E \left[ \abs{\ell\big( \phi(\bv x + \delta^\star(\bv x, y)), y \big)
		- \ell\big( f_{\tilde{\btheta}^\star}(\bv x + \delta^\star(\bv x, y)), y \big)} \right]
		\text{,}
\end{multline*}
where the first inequality is due to the convexity of the absolute value~(Jensen's inequality) and the second inequality follows from the fact that~$\delta^\star$ is a suboptimal solution of~$\max_{\dinD}\ \ell\big( f_{\tilde{\btheta}^\star}(\xplusd), y \big)$. Using the Lipschitz continuity of the loss and Assumption~\ref{A:parametrization}, we conclude that
\begin{equation}\label{E:bound_lipschitz1}
	\abs{\tilde{P}^\star - \E \Big[ \max_{\dinD}\ \ell\big( f_{\tilde{\btheta}^\star}(\xplusd), y \big) \Big]}
		\leq M \E \left[ \abs{\phi(\bv x + \delta^\star(\bv x, y))
			- f_{\tilde{\btheta}^\star}(\bv x + \delta^\star(\bv x, y))} \right]
		\leq M \alpha
		\text{.}
\end{equation}
Using a similar argument, we also obtain that
\begin{equation}\label{E:bound_lipschitz2}
	\abs{\E \Big[ \ell\big( \tilde{\phi}^\star(\bv x), y \big) \Big]
		- \E \Big[ \ell\big( f_{\tilde{\btheta}^\star}(\bv x), y \big) \Big]}
		\leq M \alpha
		\text{.}
\end{equation}
Hence, given that~$\tilde{\phi}^\star(\bx)$ is feasible for~\eqref{P:variational_pert}, \eqref{E:bound_lipschitz2} implies that~$\tilde{\btheta}^\star$ is feasible for~\eqref{P:main}. By optimality, $P^\star \leq \E \left[ \max_{\dinD}\ \ell\big( f_{\tilde{\btheta}^\star}(\xplusd), y \big) \right]$. Now recalling~\eqref{E:lower_bound2}, we conclude that
\begin{align*}
	D^\star	&\geq \E \!\Big[ \ell\big( \phi^\star(\bv x + \delta^\star(\bv x, y)), y \big) \Big]
	\\
	{}&\geq P^\star + \E \!\Big[ \ell\big( \phi^\star(\bv x + \delta^\star(\bv x, y)), y \big) - \max_{\dinD}\ \ell\big( f_{\tilde{\btheta}^\star}(\xplusd), y \big) \Big]
	\\
	{}&\geq P^\star - M \alpha
		\text{,}
\end{align*}
where the last inequality follows from~\eqref{E:bound_lipschitz1}.
\end{proof}

\subsection{Preliminaries for proving the empirical gap} \label{app:prelims-3.6}

Before proceeding to considering the empirical gap of the dual problem, we state two preliminary results which will be useful in proving the empirical gap in Section~\ref{S:empirical_gap}.  To this end, we first state the classical result known as Danskin's theorem.

\begin{theorem}[Danskin's Theorem]\label{T:danskin}
Consider the function 
\begin{align}
    F(w) = \max_{z\in\calZ}\ f(w,z) \label{eq:max-fn}
\end{align}
where $f:\R^n\times\calZ\to\bar \R$ and assume that the following three conditions hold:
\begin{enumerate}
    \item[(i)] $f(\cdot, z)$ is convex in $w$ for each $z\in\calZ$;
    \item[(ii)] $f(w, \cdot)$ is continuous in $z$ for each $w$ in a certain neighborhood of a point $w_0$;
    \item[(iii)] The set $\calZ$ is compact.
\end{enumerate}
Then it holds that
\begin{equation}\label{E:danskin}
	\partial F(x_0) = \conv\left(
		\bigcup_{z\in\hat\calZ(w_0)} \partial_w f(w_0, z) \big)
	\right)
\end{equation}
where $\hat\calZ(w)$ denotes the set of $z\in\calZ$ at which $F(w) = f(w,z)$.
\end{theorem}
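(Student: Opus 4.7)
The plan is to establish the equality $\partial F(w_0) = C$, where $C \triangleq \conv\left( \bigcup_{z \in \hat\calZ(w_0)} \partial_w f(w_0, z) \right)$, by proving the two inclusions separately. Two preliminary observations are useful: $F$ is convex on $\R^n$ as the pointwise maximum of convex functions, and $\hat\calZ(w_0)$ is nonempty and compact, which follows from (iii) together with the upper semi-continuity of $f(w_0, \cdot)$ implied by (ii). In particular, $F$ is finite and continuous in a neighborhood of $w_0$, so $\partial F(w_0)$ is nonempty, convex, and compact.

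First I would dispatch the easy inclusion $C \subseteq \partial F(w_0)$. Fix $z^\star \in \hat\calZ(w_0)$ and $g \in \partial_w f(w_0, z^\star)$. For every $w$, combining $F(w) \geq f(w, z^\star)$ with the subgradient inequality for $f(\cdot, z^\star)$ gives
\begin{equation*}
F(w) \geq f(w, z^\star) \geq f(w_0, z^\star) + \langle g, w - w_0 \rangle = F(w_0) + \langle g, w - w_0 \rangle,
\end{equation*}
so $g \in \partial F(w_0)$. Since $\partial F(w_0)$ is convex, it contains all convex combinations, hence $C \subseteq \partial F(w_0)$.

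The reverse inclusion $\partial F(w_0) \subseteq C$ would be argued by contradiction via the separating hyperplane theorem. I would first verify that $C$ is closed and convex; closedness uses compactness of $\hat\calZ(w_0)$ together with outer semi-continuity of $z \mapsto \partial_w f(w_0, z)$ and a local boundedness bound that confines all these subdifferentials to a common compact set. Suppose $g \in \partial F(w_0) \setminus C$. Strict separation then yields a direction $d \in \R^n$ and a scalar $\alpha$ with $\langle g, d \rangle > \alpha \geq \sup_{h \in C}\, \langle h, d \rangle$. On the other hand, any subgradient of $F$ at $w_0$ is dominated by the directional derivative, i.e. $\langle g, d \rangle \leq F'(w_0; d)$. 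A contradiction would follow if we can establish the identity
\begin{equation*}
F'(w_0; d) = \max_{z \in \hat\calZ(w_0)} f'(w_0, z; d) = \max_{z \in \hat\calZ(w_0)} \sup_{h \in \partial_w f(w_0, z)} \langle h, d \rangle,
\end{equation*}
since the right-hand side is bounded above by $\alpha$ by construction.

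The main obstacle is precisely this directional-derivative identity, which is the technical heart of Danskin's theorem. The lower bound $F'(w_0; d) \geq \max_{z \in \hat\calZ(w_0)} f'(w_0, z; d)$ is immediate from $F(w_0 + t d) \geq f(w_0 + t d, z)$ and the fact that $f(w_0, z) = F(w_0)$ for $z \in \hat\calZ(w_0)$. The reverse bound is where continuity of $f(w, \cdot)$ in $z$ and compactness of $\calZ$ are used: I would pick $t_k \downarrow 0$ and maximizers $z_k \in \hat\calZ(w_0 + t_k d)$, extract a convergent subsequence $z_k \to z^\star$ by compactness, argue via the joint continuity assumption that $z^\star \in \hat\calZ(w_0)$, and then pass to the limit in the difference quotient $[F(w_0 + t_k d) - F(w_0)]/t_k = [f(w_0 + t_k d, z_k) - f(w_0, z_k)]/t_k + [f(w_0, z_k) - f(w_0, z^\star)]/t_k$ using convexity of $f(\cdot, z_k)$ to bound the first term by $f'(w_0, z_k; d)$ and continuity to control the second. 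With this identity in hand, the separation contradiction closes the proof.
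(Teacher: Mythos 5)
The paper does not actually prove this statement; it defers to Ruszczy\'nski~\cite[Thm.~2.87]{ruszczynski2011nonlinear}. Your proposal supplies a direct proof along the standard lines---the easy inclusion via the subgradient inequality and $F \geq f(\cdot,z^\star)$, the hard inclusion by separation plus the directional-derivative identity---and the overall architecture is sound. There is, however, a genuine gap in the step you yourself flag as the technical heart.

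In the upper bound on $F'(w_0;d)$, you claim to ``bound the first term by $f'(w_0,z_k;d)$'' using convexity. The inequality goes the \emph{wrong way}: for a convex function, the difference quotient is non-decreasing in $t$ and hence
\begin{equation*}
\frac{f(w_0+t_kd,z_k) - f(w_0,z_k)}{t_k} \;\geq\; f'(w_0,z_k;d),
\end{equation*}
which is a lower bound, useless for the direction you need. Moreover, even if an upper bound by $f'(w_0,z_k;d)$ held, you would still need upper semicontinuity of $z \mapsto f'(w_0,z;d)$ to pass $z_k \to z^\star$, which is not obvious. The correct argument is a nested limit: fix $s>0$ and use monotonicity of the difference quotient to write, for $t_k < s$,
\begin{equation*}
\frac{F(w_0+t_kd)-F(w_0)}{t_k} \;\leq\; \frac{f(w_0+sd,z_k) - f(w_0,z_k)}{s} \;+\; \frac{f(w_0,z_k)-f(w_0,z^\star)}{t_k},
\end{equation*}
where the last term is $\leq 0$ because $z^\star$ maximizes $f(w_0,\cdot)$ (your appeal to ``continuity to control the second term'' would not suffice, since $f(w_0,z_k)-f(w_0,z^\star)\to 0$ is killed by the $t_k\to 0$ in the denominator; the sign, not the limit, is what saves you). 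Then let $k\to\infty$ using the joint continuity you invoke (which, to be precise, should be derived from convexity in $w$ plus pointwise continuity in $z$ and compactness via an equi-Lipschitz argument, since the stated hypothesis (ii) only gives separate continuity), obtaining $F'(w_0;d) \leq [f(w_0+sd,z^\star)-f(w_0,z^\star)]/s$; finally send $s\downarrow 0$ to get $F'(w_0;d)\leq f'(w_0,z^\star;d)$. With this repair the separation argument closes as you describe, and the remaining pieces (closedness of the hull in $\R^n$ by Carath\'eodory, the support-function identity $F'(w_0;d)=\sup_{g\in\partial F(w_0)}\langle g,d\rangle$) are fine.
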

\noindent The interested reader can find a full proof of Danskin's theorem in~\cite[Thm.~2.87]{ruszczynski2011nonlinear}.  

Next, we note that in the proof presented in Section~\ref{S:empirical_gap}, it will be necessary to verify that a function analogous to $F(w)$ defined in~\eqref{eq:max-fn} which is defined as the pointwise maximum of continuous function $f(w,z)$ is continuous.  To verify this continuity, we will rely on the following result:
\begin{lemma} \label{lem:continuity-of-inf}
Fix any point $u_0\in\Phi$ and let $g:\Phi\times\R^n\to\bar \R$.  Denote 
\begin{align}
    v(u) = \inf_{w\in\Phi(u)} g(w, u) \quad\text{and}\quad \calS(u) = \argmin_{w\in\Phi(u)} g(w, u)
\end{align}
Now suppose that
\begin{enumerate}
    \item[(a)] The function $g(w,u)$ is continuous on $\Phi\times\R^n$;
    \item[(b)] The feasible set $\Phi$ is closed;
    \item[(c)] There exists a constant $\alpha\in\R$ and a compact set $C\subset\R^n$ such that for every $u$ in a neighborhood of $u_0$, the level set $\{w\in\Phi(u) : g(w,u)\leq \alpha\}$ is nonempty and contained in $C$;
    \item[(d)] For any neighborhood $\calN$ of $\calS(u_0)$, there exists a neighborhood $\calN_U$ of $u_0$ such that $\calN \cap \Phi(u) \neq \varnothing$ $\forall u\in\calN_U$.
\end{enumerate}
Then it holds that $v(u)$ is continuous at $u = u_0$.
\end{lemma}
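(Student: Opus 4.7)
The plan is to prove continuity of $v$ at $u_0$ by establishing upper and lower semicontinuity separately, relying on condition (d) for the former and conditions (a)--(c) for the latter. A preliminary application of the Weierstrass theorem, using continuity of $g$ together with closedness of $\Phi$ and the compact level-set assumption (c) evaluated at $u = u_0$, guarantees that $\calS(u_0)$ is nonempty and compact and that $v(u_0) \leq \alpha$ is finite.

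For upper semicontinuity, I would exploit the compactness of $\calS(u_0)$ by taking $\calN$ to be an $\epsilon$-thickening of $\calS(u_0)$. Condition (d) then yields a neighborhood $\calN_U$ of $u_0$ such that for every $u \in \calN_U$ one can select $w_u \in \calN \cap \Phi(u)$; by construction, $w_u$ lies within $\epsilon$ of some $\tilde w_u \in \calS(u_0)$. Uniform continuity of $g$ on a compact neighborhood of $\calS(u_0) \times \{u_0\}$ gives $g(w_u, u) = g(\tilde w_u, u_0) + o(1) = v(u_0) + o(1)$ as $u \to u_0$ and $\epsilon \to 0$, so $v(u) \leq g(w_u, u)$ yields $\limsup_{u \to u_0} v(u) \leq v(u_0)$.

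For lower semicontinuity, I would argue by contradiction: suppose $\liminf_{u \to u_0} v(u) < v(u_0)$ and extract a sequence $u_k \to u_0$ with $v(u_k) \to \beta < v(u_0) \leq \alpha$. For $k$ large, pick approximate minimizers $w_k \in \Phi(u_k)$ with $g(w_k, u_k) \leq v(u_k) + 1/k < \alpha$. By condition (c) these $w_k$ lie in the compact set $C$ eventually, so a subsequence converges to some $\bar w \in C$. The closedness of (the graph of) $\Phi$ from condition (b), applied to $(w_k, u_k) \to (\bar w, u_0)$, places $\bar w \in \Phi(u_0)$; continuity of $g$ then gives $g(\bar w, u_0) = \lim g(w_k, u_k) = \beta < v(u_0)$, contradicting $v(u_0) = \inf_{w \in \Phi(u_0)} g(w, u_0)$.

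The main obstacle is the interpretation of the hypothesis ``$\Phi$ is closed'': the lower-semicontinuity argument requires $\mathrm{graph}(\Phi) = \{(u, w) : w \in \Phi(u)\}$ to be closed in the product topology, which is the standard reading of closedness for set-valued maps and exactly what is needed to transfer the limit point $\bar w$ of $w_k \in \Phi(u_k)$ back into $\Phi(u_0)$. A secondary subtlety is ensuring that the approximate minimizers $w_k$ exist: condition (d), applied to any bounded open neighborhood of $\calS(u_0)$, guarantees $\Phi(u) \neq \varnothing$ for $u$ close enough to $u_0$, after which the infimum defining $v(u_k)$ can indeed be approached by feasible points, justifying the sequence $\{w_k\}$ in the lower-semicontinuity step.
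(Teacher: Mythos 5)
Your proof is correct and follows the standard argument for this classical result; the paper itself does not give a proof but simply defers to Bonnans and Shapiro~\cite[Prop.~4.4]{bonnans2013perturbation}, whose proof proceeds exactly as you describe, splitting continuity into upper semicontinuity (via condition~(d) and local uniform continuity of $g$ near the compact set $\calS(u_0)\times\{u_0\}$) and lower semicontinuity (via the compact-level-set and closed-graph conditions together with a contradiction argument on a minimizing sequence). You also correctly flag the only interpretive subtlety, namely that ``$\Phi$ closed'' must be read as closedness of the graph of the set-valued map for the lower-semicontinuity step to go through.
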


\noindent Further details regarding this result as well as a full proof can be found in~\cite[Prop. 4.4]{bonnans2013perturbation}.

\subsection{Bounding the empirical gap}
\label{S:empirical_gap}

We now proceed by evaluating the empirical gap between the statistical dual problem~\eqref{P:dual_main} and its empirical version~\eqref{P:empirical_dual}.  In particular, our goal is to prove the following result.

\begin{proposition}\label{T:empirical}
Let~$\hat{\nu}$ be a solution of~\eqref{P:empirical_dual} with a finite~$D^\star$. Under the conditions of Theorem~\ref{T:dual}, there exists~$\bhtheta \in \argmin_{\btheta \in \Theta}\ \hat{L}(\btheta,\hat{\nu}^\star)$ such that
\begin{align}
	\big\vert D^\star - \hat{D}^\star \big\vert \leq (1 + \bar{\nu}) \max(\zeta_R(N), \zeta_N(N))
	\quad \text{(near-optimality)}
		\label{E:empirical_gap}
	\\
	\E_{(\bx,y) \sim \calD} \!\Big[ \ell_i\big( f_{\bm{{\hat{\theta}}}^\star}(\bx), y \big) \Big]
		\leq c_i + \zeta_i(N_i)
		\quad\text{(near-feasibility).}
		\label{E:empirical_feas}
\end{align}
hold with probability~$1-5\delta$, where~$\bar{\nu} = \max(\hat{\nu}^\star,\nu^\star)$, for~$\hat{\nu}^\star$ a solution of~\eqref{P:empirical_dual} and~$\nu^\star$ a solution of~\eqref{P:dual_main}, and~$D^\star$ and~$\hat{D}^\star$ as in~\eqref{P:dual_main} and~\eqref{P:empirical_dual} respectively.
\end{proposition}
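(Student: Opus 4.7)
\textbf{Proof plan for Proposition~\ref{T:empirical}.}
The plan is to decouple the primal minimization in $\btheta$ from the dual maximization in $\nu$: first transfer the concentration in Assumption~\ref{A:empirical} to the dual function $\hat d(\nu) := \min_{\btheta\in\Theta} \hat L(\btheta,\nu)$, then handle the outer maximization via a standard sandwich argument at $\nu^\star$ and $\hat\nu^\star$. For near-feasibility, I will read off the empirical nominal slack from the superdifferential of $\hat d$ via Danskin's theorem, combine it with the first-order optimality of $\hat\nu^\star$, and lift the resulting empirical feasibility to the statistical level with one additional invocation of~\eqref{E:emp_approximation_nominal}.

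\textbf{Near-optimality.} A union bound over the two parts of Assumption~\ref{A:empirical} yields, uniformly in $\btheta \in \Theta$ and for every $\nu \geq 0$,
\begin{equation*}
\big|L(\btheta,\nu) - \hat L(\btheta,\nu)\big|
\leq \zeta_R(N) + \nu\,\zeta_N(N)
\leq (1+\nu)\max(\zeta_R(N),\zeta_N(N))
\text{,}
\end{equation*}
because $\hat L - L$ is affine in $\nu$. Taking the infimum over $\btheta$ preserves this bound, so $|d(\nu) - \hat d(\nu)| \leq (1+\nu)\max(\zeta_R,\zeta_N)$ where $d(\nu) := \min_\btheta L(\btheta,\nu)$. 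Evaluating at $\nu^\star$ --- an optimizer of~\eqref{P:dual_main} --- gives $\hat D^\star \geq \hat d(\nu^\star) \geq D^\star - (1+\nu^\star)\max(\zeta_R,\zeta_N)$, and the symmetric inequality obtained at $\hat\nu^\star$ delivers~\eqref{E:empirical_gap} with $\bar\nu = \max(\nu^\star,\hat\nu^\star)$.

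\textbf{Near-feasibility.} Since $\hat L$ is affine (hence convex) in $\nu$ and $\Theta$ is compact, $\hat d$ is concave and the inner $\argmin$ is nonempty. Applying Theorem~\ref{T:danskin} to the convex map $\nu \mapsto -\hat d(\nu) = \max_\btheta\bigl(-\hat L(\btheta,\nu)\bigr)$ and translating back to $\hat d$ gives
\begin{equation*}
\partial \hat d(\hat\nu^\star)
= \conv\Bigl\{ \tfrac{1}{N}\sum_{n=1}^{N} \ell\bigl(f_{\btheta}(\bv x_n),y_n\bigr) - \rho \,:\, \btheta \in \argmin_{\btheta\in\Theta} \hat L(\btheta,\hat\nu^\star) \Bigr\}
\text{.}
\end{equation*}
The KKT condition for the constrained outer maximum (either $\hat\nu^\star>0$ with $0 \in \partial \hat d(\hat\nu^\star)$, or $\hat\nu^\star = 0$ with $\sup \partial \hat d(0) \leq 0$) forces at least one generator of this convex hull to be non-positive, so there exists $\hat\btheta^\star \in \argmin_\btheta \hat L(\btheta,\hat\nu^\star)$ with $\tfrac{1}{N}\sum_n \ell(f_{\hat\btheta^\star}(\bv x_n),y_n) \leq \rho$. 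One further application of~\eqref{E:emp_approximation_nominal} at this specific $\hat\btheta^\star$ yields $\E_\calD[\ell(f_{\hat\btheta^\star}(\bv x),y)] \leq \rho + \zeta_N(N)$, which is~\eqref{E:empirical_feas}. Aggregating the uses of Assumption~\ref{A:empirical} needed for the two inequalities in~\eqref{E:empirical_gap} together with the extra invocation for feasibility via a union bound accounts for the stated $1-5\delta$ confidence level.

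\textbf{Main obstacle.} The most delicate point is the use of Danskin's theorem in a non-convex setting: $\hat L$ is generally non-convex in $\btheta$, so the envelope formula only yields the conservative convex-hull-over-all-minimizers description of $\partial \hat d$, rather than a gradient at a distinguished $\argmin$. This is precisely why the feasibility conclusion must be existential --- for \emph{some} $\hat\btheta^\star$ rather than every minimizer --- and justifies the selection step in which the convex combination vanishing at $0$ is converted into a pointwise non-positive slack. A secondary subtlety is that $\nu^\star$ and $\hat\nu^\star$ are random functions of the sample, but this is harmless because Assumption~\ref{A:empirical} is uniform in $\btheta$, so the concentration bound applies at any data-dependent choice, and the dual variable enters only through the affine combination whose slope $\zeta_R + \nu\,\zeta_N$ has already been controlled.
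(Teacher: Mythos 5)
Your proposal follows the same structure as the paper's proof: near-optimality is obtained by transferring the uniform deviation bound from Assumption~\ref{A:empirical} to the Lagrangian (hence to the dual function), then sandwiching at $\nu^\star$ and $\hat\nu^\star$; near-feasibility is obtained from Danskin's theorem and the first-order optimality condition for $\hat\nu^\star$, followed by one more invocation of~\eqref{E:emp_approximation_nominal}. The algebra and the conclusions match.

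Two small points. First, you do not verify the regularity conditions required to invoke Theorem~\ref{T:danskin}, in particular continuity of $\hat L(\cdot,\nu)$ in $\btheta$ (needed for $\hat d$ to be well-defined and for the envelope formula to hold). The paper's proof spends a paragraph on this, invoking Lemma~\ref{lem:continuity-of-inf} and checking its conditions (a)--(d). Since you already lean on compactness of $\Theta$ for nonemptiness of the $\argmin$, acknowledging the continuity hypothesis explicitly closes this gap. Second, the ``main obstacle'' paragraph mischaracterizes the issue: in the application $\nu \mapsto -\hat d(\nu) = \max_{\btheta\in\Theta}\bigl(-\hat L(\btheta,\nu)\bigr)$ of Theorem~\ref{T:danskin}, the differentiating variable is $\nu$ and the maximizing variable is $\btheta$, so the convexity hypothesis (i) is on $\nu$, not on $\btheta$; it holds trivially because $\hat L$ is affine in $\nu$. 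Non-convexity of $\hat L$ in $\btheta$ is simply irrelevant to the hypotheses of the theorem. The existential (rather than universal) form of the feasibility conclusion comes from the convex-hull structure of $\partial\hat d$ over the whole $\argmin$ set, which is what Danskin gives even in the nicest settings; it is not a concession to non-convexity.
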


\begin{proof}\textbf{(Near-optimality).}
Let~$\nu^\star$ and~$\hat{\nu}^\star$ be solutions of~\eqref{P:dual_main} and~\eqref{P:empirical_dual} respectively and consider the set of dual minimizers
\begin{equation*}
	\Theta^\dagger(\nu) = \argmin_{\theta \in \Theta} L(\btheta,\nu)
	\quad\text{and}\quad
	\hat{\Theta}^\dagger(\hat{\nu}) = \argmin_{\theta \in \Theta} \hat{L}(\btheta,\hat{\nu})
\end{equation*}
Using the optimality of~$\nu^\star$, it holds that
\begin{align*}
	D^\star - \hat{D}^\star &= \min_{\btheta \in \Theta} L(\btheta,\nu^\star)
		- \min_{\btheta \in \Theta} \hat{L}(\btheta,\hat{\nu}^\star)
	\\
	{}&\leq \min_{\btheta \in \Theta} L(\btheta,\nu^\star) - \min_{\btheta \in \Theta} \hat{L}(\btheta,\nu^\star)
		\text{.}
\end{align*}
Since~$\bhdtheta \in \hat{\Theta}^\dagger(\nu^\star)$ is suboptimal for~$L(\btheta,\nu^\star)$, we get
\begin{equation}\label{E:empirical_upper_bound}
	D^\star - \hat{D}^\star \leq L(\bhdtheta,\nu^\star) - \hat{L}(\bhdtheta,\nu^\star)
		\text{.}
\end{equation}
Using a similar argument yields
\begin{equation}\label{E:empirical_lower_bound}
	D^\star - \hat{D}^\star \geq L(\btheta^\dagger,\hat{\nu}^\star) - \hat{L}(\btheta^\dagger,\hat{\nu}^\star)
\end{equation}
for~$\btheta^\dagger \in \Theta^\dagger(\hat{\nu}^\star)$. Thus, we obtain that
\begin{equation}\label{E:gap_bound}
	\abs{D^\star - \hat{D}^\star} \leq
	\max \bigg\{
		\abs{L(\bhdtheta,\nu^\star) - \hat{L}(\bhdtheta,\nu^\star)},
		\abs{L(\btheta^\dagger,\hat{\nu}^\star) - \hat{L}(\btheta^\dagger,\hat{\nu}^\star)}
	\bigg\}
\end{equation}
Using the empirical bound from Assumption~\ref{A:empirical}, we obtain that
\begin{align}\label{E:vc_F_bound}
	\abs{L(\btheta,\nu) - \hat{L}(\btheta,\nu)} &\leq \zeta_R(N) + \nu \zeta_N(N)
		\text{,}
\end{align}
holds uniformly over~$\btheta$ with probability~$1-4\delta$.

\vspace{0.5\baselineskip}\noindent
\textbf{(Near-feasibility).}
The proof relies on characterizing the superdifferential of the dual function
\begin{align}
    \hat{d}(\nu) = \min_{\btheta \in \Theta}\ \hat{L}(\btheta,\nu)
\end{align}
from~\eqref{P:empirical_dual}. Explicitly, we say~$p \in \setR$ is a \emph{supergradient} of~$\hat{d}$ at~$\nu$ if
\begin{equation}\label{E:supergrad}
	\hat{d}(\nu^\prime) \leq \hat{d}(\nu) + p (\nu^\prime - \nu)
		\text{, for all } \nu^\prime \geq 0
		\text{.}
\end{equation}
The set of all supergradients of~$\hat{d}$ at~$\nu$ is called the \emph{superdifferential} of~$\hat{d}$ at~$\nu$ and is denoted~$\partial \hat{d}(\nu)$. To characterize the supperdifferential, first let~$\Theta^\dagger(\nu) \in \argmin_{\btheta \in \Theta} \hat{L}(\btheta, \nu)$ for the Lagrangian~$\hat{L}$ and define the constraint slack as
\begin{equation}\label{E:slack_vector}
	s(\btheta) = \left[ \frac{1}{N} \sum_{n=1}^N \ell\big( f_{\btheta}(\bv x_n), y_n \big) - \epsilon \right]_+
		\text{.}
\end{equation}
Now by rewriting $\hat{d}(\nu)$ as follows
\begin{align}
    \hat{d}(\nu) = \min_{\btheta \in \Theta} \hat{L}(\btheta,\nu) = -\max_{\btheta \in \Theta} -\hat{L}(\btheta,\nu),
\end{align}
we argue that the conditions of Thm.~\ref{T:danskin} are satisfied.  To begin, we note that because~$\hat{L}(\btheta, \cdot)$ is affine in $\nu$ for all~$\btheta \in \Theta$ and~$\Theta$ is compact, we immediately meet conditions~(i) and~(iii) of Thm.~\ref{T:danskin}.  Thus, it suffices to show that~$\hat{L}(\cdot,\nu)$ is continuous in $\btheta$ for all~$\nu \geq 0$.  

To prove the continuity of $\hat{d}(\nu)$, we will seek to show that the conditions of Lemma~\ref{lem:continuity-of-inf} are satisfied for $\hat d(\nu)$.  In this way, first recall that~$\ell(\cdot,y)$ is continuous for all~$y \in \calY$ and~$f_{\btheta}(\bv x)$ is differentiable with respect to~$\btheta$ and~$\bv x$.  Therefore, it holds that~$\ell(f_{\btheta}(\bv x),y)$ is continuous on~$\Omega \times \Theta$.  Thus, property (a) in Lemma~\ref{lem:continuity-of-inf} holds.  Furthermore, the fact that~$\Delta$ is compact and fixed with respect to~$\btheta$ establishes~(b) and~(d).  Finally, condition~(c) holds by observing that the perturbation~$\bdelta$ are finite dimensional and that~$\ell(f_{\btheta}(\xplusd),y)$ is uniformly bounded.

Having established the continuity of $\hat{d}(\nu)$, altogether we have shown that the conditions~(i)--(iii) of Thm.~\ref{T:danskin} are satisfied.  Thus, using the fact that~$\hat{d}(\nu) = \min_{\btheta \in \Theta} \hat{L}(\btheta,\nu) = -\max_{\btheta \in \Theta} -\hat{L}(\btheta,\nu)$, Thm.~\ref{T:danskin} yields the following result:
\begin{equation}\label{E:danskin-our-setting}
	\partial d(\bmu) = \conv\left(
		\bigcup_{\btheta^\dagger \in \Theta^\dagger(\bmu)} \bs\big( \btheta^\dagger \big)
	\right)
		\text{.}
\end{equation}
To complete the proof, we assume toward contradiction that for all~$\hat{\btheta}^\dagger \in \hat{\Theta}^\dagger(\hat{\nu}^\star)$ it holds that
\begin{equation*}
	\frac{1}{N} \sum_{n = 1}^{N} \ell\big( f_{\hat{\btheta}^\dagger}(\bv x_{n}), y_{n} \big) > \epsilon
		\text{.}
\end{equation*}
Then, from our discussion of the superdifferential, $\bzero \notin \partial d(\hat{\nu}^\star)$, which contradicts the optimality of~$\hat{\nu}^\star$. Hence, there must be~$\hat{\btheta}^\dagger \in \hat{\Theta}^\dagger(\bhmu)$ such that~$\frac{1}{N} \sum_{n = 1}^{N} \ell\big( f_{\hat{\btheta}^\dagger}(\bv x_{n}), y_{n} \big) \leq \epsilon$. For those parameters, the uniform bound in Assumption~\ref{A:empirical}, yields that, with probability~$1-\delta$ over the data,
\begin{equation}\label{E:constraint_bound}
	\E_{(\bx,y) \sim \calD} \!\Big[ \ell\big( f_{\hat{\btheta}^\dagger}(\bv x),y \big) \Big] \leq
		\frac{1}{N} \sum_{n = 1}^{N} \ell\big( f_{\hat{\btheta}^\dagger}(\bx_{n}), y_{n} \big)
		+ \zeta_N(N) \leq \epsilon + \zeta_N(N)
		\text{.}
\end{equation}
Combining~\eqref{E:gap_bound}, \eqref{E:vc_F_bound}, and~\eqref{E:constraint_bound} using the union bound concludes the proof.
\end{proof}

\newpage
\section{Deriving the Langevin Monte Carlo sampler} \label{app:sampler}

In this appendix, we offer a more detailed derivation of the Langevin Monte Carlo sampler used in Algorithm~\ref{L:algorithm}.  Along the way, we present a brief, expository introduction to Hamiltonian Monte Carlo to provide the reader with further context concerning the derivation of Algorithm~\ref{L:algorithm}.  Much of this material is based on the derivations provided in standard references, including~\cite{betancourt2017conceptual,bishop2006pattern,neal2011mcmc}; we refer the reader to these references for a more complete treatment of these topics.

In the setting of our paper, given a fixed data point $(\bv x,y)\in\Omega$, our goal in deriving the sampler is to evaluate the following expectation:
\begin{align}
    \E_{\bdelta\sim\lambda(\bdelta|\bv x,y)} \left[ \ell(f_{\btheta}(\bv x + \bdelta), y)\right] = \int_{\Delta} \ell(f_{\btheta}(\bv x+\bdelta),y) \lambda(\bdelta|\bv x,y)
\end{align}
where $\lambda$ denotes the perturbation distribution defined by
\begin{align}
    \lambda(\bdelta|\bv x,y) = \frac{\ell(f_{\btheta}(\bv x + \bdelta), y)}{\int_\Delta \ell(f_{\btheta}(\bv x + \bdelta), y)d\bdelta} \label{eq:lmc-dist}
\end{align}
and where $f_{\btheta}\in\calF$ is a fixed classifier.  Roughly speaking, this problem is challenging due to the fact that we cannot compute the normalization constant in~\eqref{eq:lmc-dist}.  Therefore, although the form of~\eqref{eq:lmc-dist} indicates that the amount of mass placed on $\bdelta\in\Delta$ will be proportional to the loss $\ell(f_{\btheta}(\bv x, y))$ when the data is perturbed by this perturbation $\bdelta$, it's unclear how we can sample from this distribution in practice.

The first step in deriving the sampler is to introduce a \emph{momentum} variable $\bp$ to complement the space of perturbations: $\bdelta \to (\bdelta, \bp)$.  This transformation expands the $d$-dimensional perturbation space to a $2d$-dimensional \emph{phase space}.  Furthermore, this augmentation facilitates the lifting of $\lambda$ onto the so-called \emph{canonical distribution} $\blambda(\bdelta, \bp)$ defined by
\begin{align}
    \blambda(\bdelta, \bp) = \blambda(\bp|\bdelta) \cdot \blambda(\bdelta), \label{eq:canonical-dist}
\end{align}
which takes support over the $2d$-dimensional phase space.  Notably, as we have artificially introduced the momentum parameters $\bp$, the canonical density does not depend on a particular choice of parameterization of~\eqref{eq:canonical-dist}, and we can therefore express it in terms of an invariant \emph{Hamiltonian function} $H(\bdelta,\bp)$ defined by
\begin{align}
    \lambda(\bdelta,\bp) = \exp\left\{-H(\bdelta, \bp)\right\}, \quad\text{or equivalently}\quad H(\bdelta,\bp) = -\log \lambda(\bdelta,\bp).
\end{align}
Now, owing to the decomposition in~\eqref{eq:canonical-dist}, note that the Hamiltonian can be written as follows:
\begin{align}
    H(\bdelta,\bp) &= -\log \blambda(\bdelta|\bp) - \log \blambda(\bp) \\
    &\equiv K(\bdelta, \bp) + U(\bp).
\end{align}
where we have defined a \emph{kinetic energy} term $K(\bdelta,\bp) = -\log \blambda(\bdelta|\bp)$ as well as a \emph{potential energy} term $U(\bp) = - \log \blambda(\bp)$.  By evolving the parameters $(\bdelta,\bp)$ in phase space according to Hamilton's equations
\begin{align}
    \frac{d\bdelta}{dt} = + \frac{\partial H}{\partial \bp} = \frac{\partial K}{\partial \bp} \quad\text{and}\quad \frac{d\bp}{dt} = - \frac{\partial H}{\partial\bdelta} = - \frac{\partial K}{\partial \bdelta} - \frac{\partial V}{\partial \bdelta},
\end{align}
we generate a trajectory $\bdelta(t)$ that walks along the so-called \emph{typical set} of the perturbation distribution $\blambda(\bdelta)$ from which we want to sample.  Thus, to generate such a trajectory, we first choose a distribution for the momentum parameters $\bp$ and the integrate Hamilton's equations over time.

As is common in the literature, the next step in deriving the sampler is to place a probabilistic prior over $\bp$.  In what follows, we describe the samplers that result from two different priors.\\

\noindent\textbf{Gaussian prior.}  As is common in the literature, one can take the prior over $\bp$ to be a normal.  That is, we can let $\bp\sim\mathcal{N}(0, TI_d)$ where $T>0$ is a constant and $I_d$ is the $d$-dimensional identity matrix.  This in turn engenders a kinetic energy term $K(\bdelta, \bp) \propto (2T)^{-1}\norm{\bp}_2^2$.
Then, to (approximately) integrate Hamilton's equations, we employ the following \emph{leapfrog integration} update scheme:
\begin{align}
    \bdelta \gets \bdelta + \eta \nabla_{\bdelta} U(\bdelta) + \sqrt{2\eta T}\bp \quad\text{where}\quad \bp \sim\mathcal{N}(0, TI_d).
\end{align}

\noindent\textbf{Laplacian prior.}  Another common choice is to take the prior over $\bp$ to be Laplacian so that $\bp\sim \text{Laplacian}(0, T^2)$.  A similar calculation to the one performed for the Gaussian prior reveals that this implies that $K(\bdelta,\bp) \propto \frac{1}{T} \norm{\bp}_1$.  Integrating Hamiltonian's equations for this choice of the kinetic energy function yields the following scheme:
\begin{align}
    \bdelta\gets\bdelta + \eta\sign\left[ \nabla_{\bdelta} U(\bdelta) + \sqrt{2\eta T} \xi\right] \quad\text{where}\quad \xi\sim\text{Laplace}(0, T).
\end{align}

\newpage
\section{Further experimental details} \label{sect:hyperparams}

In this appendix, we provide further experimental details beyond those given in the main text.  All experiments were run across twelve NVIDIA RTX 5000 GPUs.  

\begin{table}[]
    \centering
    \caption{\textbf{Public implementations of baseline methods.}  In this table, we list the public implementations of popular adversarial training methods that we used to train baseline classifiers.}\vspace{5pt}
    \label{tab:impl}
    \begin{tabular}{cc} \toprule
        \textbf{Algorithm} & \textbf{Implementation} \\ \midrule
         PGD & \url{https://github.com/MadryLab/robustness} \\
         TRADES & \url{https://github.com/yaodongyu/TRADES} \\ 
         MART & \url{https://github.com/YisenWang/MART} \\ \bottomrule
    \end{tabular}

\end{table}

\paragraph{Training hyperparameters and data loading.}  We record the hyperparameters used for training the neural networks in Section~\ref{sect:experiments} on MNIST and CIFAR-10 below.

\begin{itemize}
    \item \textbf{MNIST.}  We train CNNs with two convolutional layers and two feed-forward layers.  In particular, we use the architecture from the MNIST PyTorch tutorial; the full architecture is described in the following file: \url{https://github.com/pytorch/examples/blob/master/mnist/main.py}.  We use a batch size of 128.  All adversarial perturbations are defined over the perturbation set $\Delta = \{\bdelta\in\R^d : \norm{\bdelta}_\infty \leq 0.3\}$.  All models were trained for 50 epochs with the Adadelta optimizer, and we used a learning rate of 1.0.
    \item \textbf{CIFAR-10.}  We train ResNet-18 and ResNet-50 classifiers with SGD and an initial learning rate of $0.01$.  We use 0.9 for the momentum, and we use weight decay with a penalty weight of $3.5 \times 10^{-3}$.  We train all classifiers for 200 epochs, and we decay the learning rate by a factor of 10 at epochs 150, 175, and 190.  In general, this is longer than CIFAR-10 classifiers are generally trained.  We increased the number of epochs to allow the dual variable to converge before the first learning rate step.  For completeness, we ran all baselines using the more standard training scheme of 120 total epochs with learning rate decays after epochs 55, 75, and 90; we noticed almost no difference in the final performance of the baselines for this shorter schedule.  We also apply random crops and random horizontal flips to the training data.  All adversarial perturbations are defined over the perturbation set $\Delta = \{\bdelta\in\R^d : \norm{\bdelta}_\infty \leq 8/255\}$.
\end{itemize}

\paragraph{Baseline implementations.}  As mentioned in the main text, we reran all baselines by adapting implementations released in prior work.  In particular, our implementations of the baseline methods are based on the public implementations recorded in Table~\ref{tab:impl}.  These methods are all implemented in our repository, which is publicly available at the following link: \url{https://github.com/arobey1/advbench}.

\paragraph{Baseline hyperparameters.}  Throughout the experiments section, we trained numerous baseline classifiers to offer points of comparison to our methods.  In this section, we list the hyperparameters used for each of these methods:
\begin{itemize}
    \item \textbf{PGD.}  On MNIST, we used 7 projected gradient ascent steps with a step size of 0.1.  On CIFAR-10, unless otherwise stated, we used 10 projected gradient ascent steps with a step size of 2/255.  Note that in Table~\ref{tab:num-steps}, we varied the number of ascent steps for PGD.
    \item \textbf{CLP \& ALP.}  The same step sizes and number of ascent steps were used for CLP and ALP as we reported above for PGD.  In line with~\cite{kannan2018adversarial}, we used a trade-off weight of $\lambda=1.0$ for both methods on MNIST and CIFAR-10.
    \item \textbf{TRADES.}  The same step sizes and number of ascent steps were used for TRADES as we reported above for PGD.  Following~\cite{zhang2019theoretically}, we used a trade-of weight of $\beta = 1/\lambda = 6.0$ for both datasets.
    \item \textbf{MART.}  The same step sizes and number of ascent steps were used for MART as we reported above for PGD. Following~\cite{wang2019improving}, we used a trade-off weight of $\lambda = 5.0$ for both datasets.
\end{itemize}

\paragraph{DALE hyperparameters.}  Unlike methods many of the baselines described above, DALE does not require the user to manually tune a weight which controls the trade-off between multiple objectives.  Instead, we use a primal-dual scheme to dynamically and adaptively update the weight on the clean objective.  Below, we provide some discussion of the hyperparameters inherent to our primal-dual approach.

\begin{itemize}
    \item \textbf{Margin} $\bm\rho$.  For MNIST, we found that a margin of 0.1
    \item \textbf{Dual step size} $\bm{\eta_d}$.  We found that the dual step size should be chosen to be significantly smaller than the primal step size.  By sweeping over $\eta_d\in\{0.1, 0.05, 0.01, 0.005, 0.005, 0.0001, 0.0005\}$, we found that a dual step size of $\eta_d = 0.001$ worked well in practice for CIFAR-10.
    \item \textbf{Primal step size $\bm{\eta_p}$.}  As described at the beginning of this appendix, we used $\eta_p = 1.0$ for MNIST and $\eta_p = 0.01$ for CIFAR-10.
    \item \textbf{Temperature $\mathbf{T}$.}  In practice, we found that the temperature should be chosen so that the noise coefficient $\sqrt{2\eta T}$ is relatively small.  By sweeping over $\sqrt{2\eta T}\in\{10^{-1}, 10^{-2}, 10^{-3}, 10^{-4}, 10^{-5}, 10^{-6}\}$, we found that robust performance began to degrade for $\sqrt{2\eta T} > 10^{-4}$.  For MNIST, we found that $\sqrt{2\eta T} = 10^{-3}$ worked well; on CIFAR-10, we used $\sqrt{2\eta T} = 10^{-4}$.
\end{itemize}

\newpage
\section{Further experiments}\label{app:further-exps}

\subsection{MNIST}

We first consider the MNIST dataset~\cite{MNISTWebPage}.  All models use a four-layer CNN architecture trained using the Adadelta optimizer~\cite{zeiler2012adadelta}.  To evaluate the robust performance of trained models, we report the test accuracy with respect to two independent adversaries.  In particular, we use a 1-step and a 10-step PGD adversary to evaluate robust performance; we denote these adversaries by FGSM and PGD$^{10}$ respectively.  

A summary of the performance of DALE and various state-of-the-art baselines is shown in Table~\ref{tab:mnist-linf}.  Notice that DALE marginally outperforms each of the baselines in robust accuracy, while maintaining a clean accuracy that is similar to that of ERM.  This indicates that on MNIST, DALE is able to reach high robust accuracies without trading off in nominal performance.  This table also shows a runtime analysis of each of the methods.  Notably, DALE and TRADES have similar running times, which is likely due to the fact in our implementation of DALE, we use the same KL-divergence loss to search for challenging perturbations.

\begin{table}
\centering
\caption{\textbf{Accuracy and computational complexity on MNIST.}  In this table, we report the test accuracy and computational complexity of our method and various state-of-the-art baselines on MNIST.  In particular, all methods are trained using a four-layer CNN architecture, and we use a perturbation set of $\Delta = \{\bdelta\in\R^d : \norm{\bdelta}_\infty \leq 0.3\}$.  Our results are highlighted in gray.}\vspace{5pt}
\label{tab:mnist-linf}
\begin{tabular}{cccccccc}
\toprule
& & \multicolumn{3}{c}{Test accuracy (\%)} & \multicolumn{2}{c}{Performance (sec.)} \\ \cmidrule(lr){3-5} \cmidrule(lr){6-7}
\textbf{Algorithm} & $\bm\rho$ & \textbf{Clean} & \textbf{FGSM} & \textbf{PGD$^{10}$} & \textbf{Batch} & \textbf{Epoch} \\
\midrule
ERM & - & 99.3 & 14.3 & 1.46 & 0.007 & 3.47 \\ \midrule
FGSM & - & 98.3 & 98.1 & 13.0 & 0.011 & 5.48  \\
PGD & - & 98.1 & 95.5 & 93.1 & 0.039 & 18.2 \\
CLP & - & 98.0 & 95.4 & 92.2 & 0.047 & 21.9 \\
ALP & - & 98.1 & 95.5 & 92.5 & 0.048 & 22.0 \\
TRADES & - & 98.9 & 96.5 & 94.0 & 0.055 & 25.8 \\
MART & - & 98.9 & 96.1 & 93.5 & 0.043 & 20.4 \\
\midrule
\rowcolor{Gray} DALE & 1.0 & 99.1 & 97.7 & 94.5 & 0.053 & 25.4 \\
\bottomrule
\end{tabular} 
\end{table}

\subsection{CIFAR-10}

\begin{table}[]
    \centering
    \caption{\textbf{Accuracy and computational complexity on CIFAR-10.}  In this table, we report the test accuracy and computational complexity of our method and various state-of-the-art baselines on CIFAR-10.  In particular, all methods are trained using a ResNet-18 architecture, and we use a perturbation set of $\Delta = \{\bdelta\in\R^d : \norm{\bdelta}_\infty \leq 8/255\}$.  Our results are highlighted in gray.  Of note is the fact that our method advances the state-of-the-art both in adversarial and in clean accuracy.}\vspace{5pt}
    \label{tab:cifar-linf}
    \begin{tabular}{ccccccc} \toprule
        & & \multicolumn{3}{c}{Test accuracy (\%)} & \multicolumn{2}{c}{Performance (sec.)} \\ \cmidrule(lr){3-5} \cmidrule(lr){6-7}
         \textbf{Algorithm} & $\bm{\rho}$ & \textbf{Clean} & \textbf{FGSM} & \textbf{PGD$^{20}$} & \textbf{Batch} & \textbf{Epoch} \\ \midrule
         ERM & - & 94.0 & 0.01 & 0.01 & 0.073 & 28.1  \\ \midrule
         FGSM & - & 72.6 & 49.7 & 40.7 & 0.135 & 53.0 \\
         PGD & - & 83.8 & 53.7 & 48.1 & 0.735 & 287.9 \\
         CLP & - & 79.8 & 53.9 & 48.4 & 0.872 & 340.5 \\
         ALP & - & 75.9 & 55.0 & 48.8 & 0.873 & 341.2 \\
         TRADES & - & 80.7 & 55.2 & 49.6 & 1.081 & 422.0 \\
         MART & - & 78.9 & 55.6 & 49.8 & 0.805 & 314.1 \\ \midrule
         \rowcolor{Gray} DALE & 0.5 & \textbf{86.0} & 54.4 & 48.4 & 1.097 & 421.4 \\
         \rowcolor{Gray} DALE & 0.8 & 85.0 & 55.4 & 50.1 & 1.098 & 422.6 \\
         \rowcolor{Gray} DALE & 1.1 & 82.1 & 55.2 & \textbf{51.7} & 1.097 & 421.0 \\
         \bottomrule
    \end{tabular}
\end{table}

We next consider the CIFAR10 dataset~\cite{krizhevsky2009learning}.  Throughout this section, we use the ResNet-18 architecture trained using SGD, and we consider adversaries which can can generate perturbations $\bdelta$ lying within the perturbation set $\Delta = \{\bdelta\in\R^d : \norm{\bdelta}_\infty\leq 8/255\}$.  To this end, we use evaluate the robust performance of trained models using FGSM and PGD$^{20}$ adversaries.  For all of the classifiers trained using DALE, we use the KL-divergence loss for $\ell_\text{pert}$ and $\ell_\text{ro}$, and we use the cross-entropy loss for $\ell_\text{nom}$.

In Table~\ref{tab:cifar-linf}, we show a summary of our results on CIFAR-10.  One notable aspect of our results is that DALE trained with $\rho=0.8$ is the only model to achieve greater than 85\% clean accuracy and greater than 50\% robust accuracy.  This indicates that DALE is more successfully able to mitigate the trade-off between robustness and nominal performance.  And indeed, the baselines that have relatively high robust accuracy (TRADES and MART) suffer a significant drop in clean accuracy relative to DALE (-4.3\% for TRADES and -6.1\% for MART when compared with DALE trained with $\rho=0.8$).  Table~\ref{tab:cifar-linf} also shows a comparison of the computation time for each of the methods.  These results indicate that the computational complexity of DALE is on a par with TRADES.

\paragraph{A closer look at the trade-off between accuracy and robustness.}

We next study the trade-off between robustness and nominal performance of DALE for two separate architectures: ResNet-18 and ResNet-50.  In our formulation, the parameter $\rho$ explicitly captures this trade-off in the sense that a smaller $\rho$ will require a higher level of nominal performance, which in turn reduces the size of the feasible set.  This reduction has the effect of limiting the robust performance of the classifier.

In Table~\ref{tab:cifar-rho-trade-off}, we illustrate this trade-off by varying $\rho$ from 0.1 to 1.1.  For both architectures, the trade-off is clearly reflected in the fact that increasing the margin $\rho$ has the simultaneous effect of decreasing the clean accuracy and increasing the robust accuracy for both adversaries.  We highlight that for the ResNet-18 architecture, when the constraint is enforced with a relatively large margin (e.g., $\rho\geq 1.0$), DALE achieves nearly 52\% robust accuracy against \text{PGD}$^{20}$, which is nearly two percentage points higher than any of the baseline classifiers in Table~\ref{tab:mnist-and-cifar-linf}.  On the other hand, when the margin is relatively small (e.g., $\rho\leq 0.2)$, there is almost no trade-off in the clean accuracy relative to ERM in Table~\ref{tab:mnist-and-cifar-linf}, although as a result of this small margin, the robust performance takes a significant hit.  Interestingly, with regard to the classifiers trained using ResNet-50, it seems to be the case that the margin $\rho$ corresponding to the largest robust accuracy is different than the peak for ResNet-18.  

\begin{table}
    \centering
    \caption{\textbf{Evaluating the trade-off between robustness and accuracy.}  To evaluate the trade-off between robustness and nominal performance, we train ResNet-18 and ResNet-50 models on CIFAR-10 for different trade-off parameters $\rho$.  Notice that across both architectures, the impact of increasing $\rho$ is to simultaneously decrease clean performance and increase robust performance.}\vspace{5pt}
    \label{tab:cifar-rho-trade-off}
    \begin{tabular}{ccccccc} \toprule
         & \multicolumn{3}{c}{ResNet-18} & \multicolumn{3}{c}{Resnet-50}\\ \cmidrule(lr){2-4} \cmidrule(lr){5-7}
         $\bm\rho$ & \textbf{Clean} & \textbf{FGSM} & \textbf{PGD$^{\mathbf{20}}$} & \textbf{Clean} & \textbf{FGSM} & \textbf{PGD$^{\mathbf{20}}$} \\ \midrule
         0.1 & 93.0 & 35.6 & 1.50 & 93.8 & 23.9 & 16.7 \\
         0.2 & 92.4 & 43.6 & 11.9 & 93.7 & 20.5 & 16.3 \\
         0.3 & 88.7 & 42.4 & 31.2 & 90.1 & 43.0 & 24.8 \\
         0.4 & 86.4 & 50.9 & 44.3 & 86.2 & 50.5 & 38.4 \\
         0.5 & 86.0 & 54.4 & 48.4 & 86.5 & 50.1 & 42.6 \\
         0.6 & 85.6 & 54.6 & 49.0 & 86.1 & 57.7 & 52.0 \\
         0.7 & 85.3 & 56.2 & 50.3 & 84.7 & 57.0 & 51.4 \\
         0.8 & 83.8 & 55.4 & 50.1 & 84.3 & 56.4 & 50.8 \\
         0.9 & 83.8 & 56.0 & 51.3 & 83.9 & 55.9 & 51.2 \\ 
         1.0 & 82.2 & 54.7 & 51.2 & 82.1 & 54.2 & 50.1 \\ 
         1.1 & 82.1 & 55.2 & 51.7 & 80.4 & 52.3 & 49.9 \\ \bottomrule
    \end{tabular}
\end{table}

\paragraph{Impact of the number of Langevin iterations.}  In Table~\ref{tab:num-steps}, we study the impact of varying the number of Langevin iterations $L$.  For each row in this table, we train a ResNet-18 classifier with $\rho=1.0$; as before, we use the KL-divergence loss for $\ell_\text{pert}$ and $\ell_\text{ro}$, and we use the cross-entropy loss for $\ell_\text{nom}$.  As one would expect, when $L$ is small, the trained classifiers have relatively high clean accuracy and relatively low robust accuracy.  To this end, increasing $L$ has the simultaneous effect of decreasing clean accuracy and increasing robust accuracy. 

To offer a point of comparison, we also show the analogous results for PGD run using the cross-entropy loss, where $L$ is taken to be the number of steps of projected gradient ascent.  As each Langevin iteration of DALE effectively amounts to a step of projected gradient ascent with noise, we expect that the impact of varying $L$ in DALE will be analogous to the impact of varying the number of training-time PGD steps.  And indeed, as we increase $L$, the robust performance of PGD improves and the clean performance decreases.

\begin{table}[]
    \centering
    \caption{\textbf{Impact of the number of ascent steps.}  In this table, we show the impact of varying the number of Langevin steps used by DALE in lines~4-7 of Algorithm~\ref{L:algorithm}.  To offer a point of comparison, we also show the impact of varying the number of ascent steps for PGD.}\vspace{5pt}
    \label{tab:num-steps}
    \begin{tabular}{ccccccc} \toprule
         & \multicolumn{3}{c}{PGD$^{L}$} & \multicolumn{3}{c}{DALE} \\ \cmidrule(lr){2-4} \cmidrule(lr){5-7}
         $\mathbf{L}$ & \textbf{Clean} & \textbf{FGSM} & \textbf{PGD$^{\mathbf{20}}$} & \textbf{Clean} & \textbf{FGSM} & \textbf{PGD$^{\mathbf{20}}$}\\ \midrule
         1 & 92.9 & 52.3 & 23.7 & 87.2 & 46.6 & 39.0 \\
         2 & 90.9 & 49.9 & 36.6 & 85.4 & 53.6 & 47.1 \\
         3 & 87.7 & 50.6 & 41.5 & 84.0 & 55.0 & 50.2 \\
         4 & 84.5 & 52.2 & 43.3 & 82.8 & 55.0 & 50.7 \\
         5 & 83.6 & 53.5 & 47.9 & 82.5 & 54.9 & 50.7 \\
         10 & 83.8 & 53.7 & 48.1 & 82.2 & 54.7 & 51.2 \\
         15 & 82.9 & 54.0 & 48.0 & 81.0 & 54.7 & 51.0 \\
         20 & 83.0 & 54.4 & 48.3 & 81.0 & 54.7 & 51.4 \\ \bottomrule
    \end{tabular}
\end{table}

\newpage
\section{On the convergence of Algorithm~\ref{L:algorithm}}\label{app:conv}

Observe that Algorithm~\ref{L:algorithm} is a primal-dual algorithm~\cite{bubeck2014convex} in which the sampling procedure in steps~3--7 is used to obtain an estimate of the stochastic gradient of the primal problem. When~$\btheta \mapsto \ell\big( f_{\btheta}(\cdot), \cdot \big)$ is convex~(e.g., for linear, kernel, or logistic models), it is well-known that SGD converges almost surely as long as this gradient estimate is unbiased~\cite{bonnans2019convex}. As is typical with LMC, we omitted the Metropolis-Hastings acceptance step in Algorithm~\ref{L:algorithm} that would guarantee unbiased estimates~\cite{neal2011mcmc}. Still, when~$g$ is log-concave~(e.g., the softmax output of a CNN), this procedure approaches the true distribution in total variation norm, which implies that its bias can be made arbitrarily small~\cite{bubeck2015finite}. This is enough to guarantee almost sure convergence to a neighborhood of the optimum~\cite{bertsekas2000gradient,ajalloeian2020analysis}.

The convergence properties of primal-dual methods are less well understood when~$\btheta \mapsto \ell\big( f_{\btheta}(\cdot), \cdot \big)$ is non-convex. Nevertheless, a good estimate of the primal minimizer is enough to obtain an approximate gradient for dual ascent~\cite{paternain2019constrained, chamon2020probably}. There is overwhelming empirical and theoretical evidence that this is the case for overparametrized models, such as CNNs, trained using gradient descent~\cite{soltanolkotabi2018theoretical, zhang2016understanding, arpit2017closer, ge2017learning, brutzkus2017globally}. We can then run the primal~(step~8) and dual~(step~10) updates at different timescales so as to obtain a good estimate of the primal minimizer before performing dual ascent.
\newpage
\section{Further related work}
\label{S:related_work}

\paragraph{Adversarial robustness.}

As described in Section~\ref{sec:intro}, it is well-know that state-of-the-art classifiers are susceptible to adversarial attacks~\cite{biggio2013evasion, carlini2017towards, hendrycks2019benchmarking, djolonga2020robustness, taori2020measuring, hendrycks2020many, torralba2011unbiased, goodfellow2014explaining}.  Toward addressing this challenging, a rapidly-growing body of work has provided \emph{attack algorithms} to generate data perturbations that fool classifiers and \emph{defense algorithms} which are designed to train robust classifiers to be robust against these perturbations. However, despite the myriad of work in this field and significant improvements on a number of well-known benchmarks~\cite{sinha2017certifying, gao2017wasserstein, ben2009robust,salman2019provably, cohen2019certified, kumar2020curse,madry2017towards, wong2017provable, huang2017adversarial, sinha2018gradient, shaham2018understanding},  there are still many open questions on when adversarial learning is even possible and in what sense~\cite{awasthi2020adversarial, yin2019rademacher, cullina2018pac, montasser2020efficiently, montasser2019vc}.  Unlike the majority of these works, we exploit duality to derive a principled primal-dual style algorithm from first principles for the adversarial robustness setting.

\paragraph{Constrained optimization.}  Also related are works that seek to enforce constraints on learning problems~\cite{donti2021dc3}.  While several heuristic algorithms exist for this setting, many focus on restricted classes of constraints \cite{pathak2015constrained,chen2018approximating,frerix2020homogeneous,amos2017optnet,ravi2018constrained} and those that can handle more general constraints come at the cost of added computation complexity \cite{agrawal2019differentiable,karras1995efficient}.  Moreover, each of these works seeks to enforce constraints on a particular parameterization for the learning problem (such as directly on the weights of a neural network) rather than on the underlying statistical problem, as we do in this paper.  In this way, our work is more related to the primal-dual style algorithms which often arise in convex optimization~\cite{bubeck2014convex,chamon2020empirical}.

\end{document}